\documentclass{article}

\PassOptionsToPackage{numbers, compress}{natbib}
 \usepackage[preprint]{neurips_2026}

\usepackage[utf8]{inputenc} 
\usepackage[T1]{fontenc}    
\usepackage{hyperref}       
\usepackage{url}            
\usepackage{booktabs}       
\usepackage{amsfonts}       
\usepackage{nicefrac}       
\usepackage{microtype}      
\usepackage{xcolor}         
\usepackage{amsmath}        
\usepackage{enumitem}
\usepackage{mathtools}
\usepackage{amsthm}
\newtheorem{theorem}{Theorem}
\newtheorem{knownresult}{Known Result}

\theoremstyle{definition}

\theoremstyle{remark}

\usepackage{adjustbox}
\usepackage{multirow}
\usepackage{colortbl}
\usepackage{tabularx}

\newcolumntype{L}{>{\raggedright\arraybackslash}X}

\usepackage{amssymb}

\usepackage{subcaption}

\usepackage{graphicx}
\usepackage{tikz}
\usetikzlibrary{arrows.meta,calc,shapes.geometric}

\definecolor{spaceFill}{RGB}{244,224,174}
\definecolor{actionRed}{RGB}{196,78,82}
\definecolor{softGray}{RGB}{150,150,150}
\definecolor{infoPink}{RGB}{238,120,155}

\usepackage{pgfplots}
\pgfplotsset{compat=1.18}

\title{QuoVLA: Quotient Space for Vision-Language-Action Models}

\author{%
  Xuan Wang \\
  Department of Automation\\
  Tsinghua University\\
  \texttt{xwangrs@gmail.com} \\
  \And
  Yinan Wu \\
  Department of Automation\\
  Tsinghua University\\
  \texttt{yinanwu@ieee.org} \\
  \And
  Haoran Duan \\
  Department of Automation\\
  Tsinghua University\\
  \texttt{haoran.duan@ieee.org} \\
  \And
  Jungong Han\thanks{Corresponding author.}\\
  Department of Automation\\
  Tsinghua University\\
  \texttt{jghan@tsinghua.edu.cn} \\
}

\begin{document}

\maketitle

\begin{abstract}
Vision-Language-Action (VLA) models commonly adapt pretrained Vision-Language Models (VLMs) to robot control by mapping visual observations and language instructions to continuous actions. Existing approaches typically take an action-insufficiency view, assuming that pretrained VLM latents either lack directly usable action information or should be shielded from action-learning signals. Against this view, our \textit{Quotient Theory for VLA} shows that pretrained VLM latents are not action-insufficient but action-sufficient: they already contain the information needed for control, yet remain overcomplete by distinguishing prompt-level variations that induce the same optimal action behavior. To operationalize this theory, we propose QuoVLA, a quotient-space framework for VLA that compresses pretrained VLM latents into action-sufficient representations. Specifically, QuoVLA instantiates this principle with a quantization module and a dual-branch design with relative temporal-complexity regularization, preserving action-relevant information while removing prompt-level redundancy. Extensive experiments across multiple benchmarks demonstrate that QuoVLA achieves strong performance, with particularly notable improvements in generalization under visual, linguistic, and environmental distribution shifts. Our code will be made publicly available.
\end{abstract}

\section{Introduction}

Recent advances in Vision-Language Models (VLMs) \cite{radford2021learning,liu2023visual} have made pretrained representations a natural foundation for robot control, motivating Vision-Language-Action (VLA) models that map visual observations and language instructions to robot actions \cite{kim2024openvla,black2024pi_0,intelligence2025pi_05}. Yet representations optimized for multimodal understanding are not inherently suited to control. They must be adapted for action generation \cite{zhang2026vlm4vla,kim2025contrastive} while preserving the knowledge acquired during pretraining \cite{driess2025knowledge,yang2025instructvla}.

Existing methods cast this adaptation as a VLM-Action mismatch, but differ in how they diagnose its source. Some attribute the mismatch to insufficient action-relevant information in pretrained VLM representations, motivating further VLM fine-tuning for control \cite{zhang2026vlm4vla,du2026embodiedmidtrain,zitkovich2023rt,ye2026st4vla}. Others attribute it to the interference of action supervision, motivating designs that isolate the VLM from action-learning signals \cite{driess2025knowledge,intelligence2026pi}. Both views therefore converge on an action-insufficiency diagnosis, assuming that pretrained VLM latents lack sufficient usable information for action generation.

Against this action-insufficiency diagnosis, we develop a \emph{Quotient Theory for VLA} that reassesses pretrained VLM representations through the lens of action. The theory shows that these representations can already be action-sufficient, containing enough information to determine optimal actions, yet still encode variability that is irrelevant to the resulting action and can hinder generalization. As shown in Fig. \ref{fig:action_induced_quotient}, VLA adaptation should therefore pursue an \emph{action-minimal} representation, retaining only the minimal sufficient information for optimal action behavior. The next two paragraphs elaborate this criterion from the perspectives of (1) \textbf{minimality} and (2) \textbf{action}, respectively.

\begin{figure}[t]
\centering

\begin{subfigure}[t]{0.375\linewidth}
\centering
\resizebox{\linewidth}{!}{%
\begin{tikzpicture}[
    x=1cm,
    y=1cm,
    >=Latex,
    every node/.style={font=\scriptsize},
    redarrow/.style={-{Latex[length=1.8mm]}, line width=0.55pt, actionRed, opacity=0.72},
    infocircle/.style={draw=infoPink!80!black, dashed, line width=0.65pt, fill=infoPink!16}
]

\path[use as bounding box] (0.00,0.45) rectangle (4.85,3.60);

\def\RepPathA{
    (0.34,2.08)
    .. controls (0.20,2.40) and (0.54,2.68) .. (0.90,2.58)
    .. controls (1.12,2.52) and (1.22,2.82) .. (1.52,2.70)
    .. controls (1.82,2.58) and (2.02,2.82) .. (2.28,2.55)
    .. controls (2.55,2.28) and (2.42,1.96) .. (2.05,1.82)
    .. controls (1.78,1.72) and (1.58,1.42) .. (1.22,1.52)
    .. controls (0.92,1.60) and (0.68,1.36) .. (0.48,1.62)
    .. controls (0.32,1.82) and (0.46,1.96) .. (0.34,2.08)
    -- cycle
}

\def\ActPathA{
    (3.36,2.10)
    .. controls (3.34,2.48) and (3.66,2.76) .. (4.04,2.72)
    .. controls (4.42,2.70) and (4.74,2.43) .. (4.72,2.06)
    .. controls (4.70,1.70) and (4.38,1.42) .. (4.02,1.42)
    .. controls (3.68,1.42) and (3.40,1.64) .. (3.36,1.94)
    .. controls (3.35,2.00) and (3.37,2.05) .. (3.36,2.10)
    -- cycle
}

\node[font=\scriptsize\bfseries] at (1.35,3.24)
    {VLM Space $\mathcal{Z}$};

\path[draw=black, fill=spaceFill, line width=1.05pt, line join=round]
    \RepPathA;

\foreach \p in {
    (0.70,2.42),
    (0.78,1.70),
    (1.42,2.58),
    (1.92,2.30),
    (1.72,2.48),
    (0.96,2.15),
    (1.56,1.82)
}{
    \fill[black] \p circle (0.95pt);
}

\coordinate (z1a) at (0.9,2.02);
\coordinate (z2a) at (1.26,2.32);
\coordinate (z3a) at (1.65,2.02);

\draw[infocircle]
    (1.30,2.12) ellipse [x radius=0.58, y radius=0.36];

\filldraw[fill=actionRed, draw=black, line width=0.30pt]
    (z1a) circle (2.15pt);
\filldraw[fill=actionRed, draw=black, line width=0.30pt]
    (z2a) circle (2.15pt);
\filldraw[fill=actionRed, draw=black, line width=0.30pt]
    (z3a) circle (2.15pt);

\node[font=\scriptsize\bfseries] at ($(z1a)+(-0.16,-0.18)$) {$z_1$};
\node[font=\scriptsize\bfseries] at ($(z2a)+(0.00,0.24)$) {$z_2$};
\node[font=\scriptsize\bfseries] at ($(z3a)+(0.18,-0.18)$) {$z_3$};

\node[
    font=\scriptsize,
    align=center,
    text=infoPink!65!black
] at (1.30,0.86)
    {action-irrelevant\\information};

\draw[-{Latex[length=1.35mm]}, infoPink!65!black, line width=0.45pt]
    (1.30,1.08) to[out=90,in=-90] (1.30,1.72);

\node[font=\scriptsize\bfseries] at (4.05,3.24)
    {Action Space $\mathcal{A}$};

\path[draw=black, fill=spaceFill!45, line width=1.05pt, line join=round]
    \ActPathA;

\coordinate (a1a) at (3.98,2.04);
\coordinate (a2a) at (4.05,2.12);
\coordinate (a3a) at (4.12,2.04);

\draw[infocircle]
    (4.05,2.07) ellipse [x radius=0.21, y radius=0.15];

\filldraw[fill=actionRed, draw=black, line width=0.28pt]
    (a1a) circle (1.20pt);
\filldraw[fill=actionRed, draw=black, line width=0.28pt]
    (a2a) circle (1.20pt);
\filldraw[fill=actionRed, draw=black, line width=0.28pt]
    (a3a) circle (1.20pt);

\node[font=\tiny\bfseries, anchor=east]
    at ($(a1a)+(-0.05,-0.12)$) {$a_1$};
\node[font=\tiny\bfseries, anchor=south]
    at ($(a2a)+(0.00,0.09)$) {$a_2$};
\node[font=\tiny\bfseries, anchor=west]
    at ($(a3a)+(0.05,-0.08)$) {$a_3$};

\node[
    font=\scriptsize,
    align=center,
    text=infoPink!65!black
] at (4.05,0.86)
    {action-irrelevant\\information};

\draw[-{Latex[length=1.35mm]}, infoPink!65!black, line width=0.45pt]
    (4.05,1.08) to[out=90,in=-90] (4.05,1.90);

\draw[redarrow]
    (z1a) to[out=4,in=178] ($(a1a)+(-0.08,0.00)$);

\draw[redarrow]
    (z2a) to[out=0,in=180] ($(a1a)+(-0.08,0.00)$);

\draw[redarrow]
    (z3a) to[out=-4,in=182] ($(a1a)+(-0.08,0.00)$);


\node[font=\scriptsize] at (2.68,0.58)
    {$\pi^*(z_i)=a_i$};

\end{tikzpicture}%
}
\caption{Direct VLM-to-Action mapping.}
\label{fig:direct_vlm_action}
\end{subfigure}
\hfill
\begin{subfigure}[t]{0.585\linewidth}
\centering
\resizebox{\linewidth}{!}{%
\begin{tikzpicture}[
    x=1cm,
    y=1cm,
    >=Latex,
    every node/.style={font=\scriptsize},
    redarrow/.style={-{Latex[length=1.8mm]}, line width=0.55pt, actionRed, opacity=0.72},
    cellline/.style={softGray!65, line width=0.42pt, opacity=0.55, line cap=round},
    infocircle/.style={draw=infoPink!80!black, dashed, line width=0.65pt, fill=infoPink!16}
]

\path[use as bounding box] (0.00,0.45) rectangle (7.55,3.60);

\def\RepPathB{
    (0.34,2.08)
    .. controls (0.20,2.40) and (0.54,2.68) .. (0.90,2.58)
    .. controls (1.12,2.52) and (1.22,2.82) .. (1.52,2.70)
    .. controls (1.82,2.58) and (2.02,2.82) .. (2.28,2.55)
    .. controls (2.55,2.28) and (2.42,1.96) .. (2.05,1.82)
    .. controls (1.78,1.72) and (1.58,1.42) .. (1.22,1.52)
    .. controls (0.92,1.60) and (0.68,1.36) .. (0.48,1.62)
    .. controls (0.32,1.82) and (0.46,1.96) .. (0.34,2.08)
    -- cycle
}

\def\QuotPathB{
    (3.02,2.08)
    .. controls (2.92,2.42) and (3.26,2.70) .. (3.62,2.62)
    .. controls (3.88,2.56) and (4.04,2.82) .. (4.36,2.70)
    .. controls (4.68,2.58) and (4.92,2.30) .. (4.84,2.00)
    .. controls (4.76,1.66) and (4.42,1.58) .. (4.12,1.46)
    .. controls (3.78,1.32) and (3.36,1.44) .. (3.18,1.72)
    .. controls (3.06,1.88) and (3.08,2.00) .. (3.02,2.08)
    -- cycle
}

\def\ActPathB{
    (5.86,2.10)
    .. controls (5.84,2.48) and (6.16,2.76) .. (6.54,2.72)
    .. controls (6.92,2.70) and (7.24,2.43) .. (7.22,2.06)
    .. controls (7.20,1.70) and (6.88,1.42) .. (6.52,1.42)
    .. controls (6.18,1.42) and (5.90,1.64) .. (5.86,1.94)
    .. controls (5.85,2.00) and (5.87,2.05) .. (5.86,2.10)
    -- cycle
}

\node[font=\scriptsize\bfseries] at (1.35,3.24)
    {VLM Space $\mathcal{Z}$};

\path[draw=black, fill=spaceFill, line width=1.05pt, line join=round]
    \RepPathB;

\foreach \p in {
    (0.70,2.42),
    (0.78,1.70),
    (1.42,2.58),
    (1.92,2.30),
    (1.72,2.48),
    (0.96,2.15),
    (1.56,1.82)
}{
    \fill[black] \p circle (0.95pt);
}

\coordinate (z1b) at (0.95,2.02);
\coordinate (z2b) at (1.30,2.32);
\coordinate (z3b) at (1.65,2.02);

\draw[infocircle]
    (1.30,2.12) ellipse [x radius=0.58, y radius=0.36];

\filldraw[fill=actionRed, draw=black, line width=0.30pt]
    (z1b) circle (2.15pt);
\filldraw[fill=actionRed, draw=black, line width=0.30pt]
    (z2b) circle (2.15pt);
\filldraw[fill=actionRed, draw=black, line width=0.30pt]
    (z3b) circle (2.15pt);

\node[font=\scriptsize\bfseries] at ($(z1b)+(-0.16,-0.18)$) {$z_1$};
\node[font=\scriptsize\bfseries] at ($(z2b)+(0.00,0.24)$) {$z_2$};
\node[font=\scriptsize\bfseries] at ($(z3b)+(0.18,-0.18)$) {$z_3$};

\node[
    font=\scriptsize,
    align=center,
    text=infoPink!65!black
] at (1.3,0.86)
    {action-irrelevant\\information};

\draw[-{Latex[length=1.35mm]}, infoPink!65!black, line width=0.45pt]
    (1.3,1.08) to[out=90,in=-90] (1.30,1.72);

\node[font=\scriptsize\bfseries] at (3.95,3.24)
    {Quotient Space $\mathcal{Z}/\!\mathcal{A}$};

\path[fill=spaceFill!70]
    \QuotPathB;

\begin{scope}
    \clip \QuotPathB;

    \draw[cellline]
        (3.20,1.50)
        .. controls (3.44,1.80) and (3.50,2.30) ..
        (3.56,2.62);

    \draw[cellline]
        (3.92,1.40)
        .. controls (3.74,1.78) and (4.04,2.22) ..
        (3.98,2.68);

    \draw[cellline]
        (4.48,1.62)
        .. controls (4.24,1.90) and (4.42,2.30) ..
        (4.62,2.52);

    \draw[cellline]
        (3.10,1.86)
        .. controls (3.62,2.02) and (4.18,1.88) ..
        (4.76,2.06);

    \draw[cellline]
        (3.18,2.38)
        .. controls (3.72,2.24) and (4.20,2.46) ..
        (4.72,2.30);

\end{scope}

\path[draw=black, fill=none, line width=1.05pt, line join=round]
    \QuotPathB;

\coordinate (cstarb) at (3.95,2.10);

\filldraw[fill=actionRed, draw=black, line width=0.36pt]
    (cstarb) circle (2.55pt);

\node[font=\scriptsize\bfseries, anchor=west]
    at ($(cstarb)+(0.22,0.20)$) {$[z]$};

\draw[redarrow]
    (z1b) to[out=8,in=180] ($(cstarb)+(-0.16,0.00)$);

\draw[redarrow]
    (z2b) to[out=0,in=180] ($(cstarb)+(-0.16,0.00)$);

\draw[redarrow]
    (z3b) to[out=-8,in=180] ($(cstarb)+(-0.16,0.00)$);

\node[font=\scriptsize\bfseries] at (6.55,3.24)
    {Action Space $\mathcal{A}$};

\path[draw=black, fill=spaceFill!45, line width=1.05pt, line join=round]
    \ActPathB;

\coordinate (astarb) at (6.55,2.10);

\filldraw[fill=actionRed, draw=black, line width=0.38pt]
    (astarb) circle (2.75pt);

\node[font=\scriptsize\bfseries, anchor=west]
    at ($(astarb)+(0.23,0.00)$) {$a^*$};

\draw[redarrow]
    ($(cstarb)+(0.18,0.00)$) -- ($(astarb)+(-0.18,0.00)$);

\node[font=\scriptsize] at (4.55,0.58)
    {$q(z_i)=[z],\quad \bar{\pi}^{*}([z])=a^*$};

\end{tikzpicture}%
}
\caption{Action-induced quotient mapping.}
\label{fig:action_induced_quotient_b}
\end{subfigure}

\caption{
\textbf{Direct VLM-to-Action mapping \emph{versus} action-induced quotient mapping.}
(a) Transformer injectivity suggests that pretrained VLM latents preserve action-irrelevant prompt details\cite{nikolaou2026language}. Without quotienting, these details may propagate to the action space, causing spurious action variations and poorer generalization.
(b) With the action-induced quotient, action-irrelevant variability is collapsed before decoding the shared quotient code \([z]\) into the optimal action \(a^*\).
}
\label{fig:action_induced_quotient}
\end{figure}

Under the \textbf{minimality} criterion, action-minimal adaptation requires an explicit Quotient Space that collapses prompt-distinct yet action-equivalent latents. Such a space is motivated by the injectivity of Transformers, which suggests that pretrained latents may losslessly encode prompts and thus retain action-irrelevant variability \cite{nikolaou2026language}. To explicitly discard such redundancy, we introduce a quantization module that maps continuous VLM latents to shared discrete codes. During training, we introduce an adaptive variant of the Straight-Through Estimator \cite{bengio2013estimating,courbariaux2016binarized} to modulate gradients propagated back to the VLM, thereby reducing interference with pretrained multimodal knowledge.

Under the \textbf{action} criterion, this Quotient Space must preserve the action behavior induced by the original latent while removing only action-redundant variability. To this end, we employ a dual-branch action-generation design: a stop-gradient unquantized branch predicts actions from the original VLM latent as a reference, while a quantized branch predicts actions from the compressed latent. We further impose a relative temporal-complexity constraint to keep the quantized branch no more complex than the reference, thereby preserving action sufficiency while eliminating action-redundant information.

Our contributions are threefold:
\begin{itemize}
    \item We propose, for the first time, a \textit{Quotient Theory for VLA} that challenges the action-insufficiency diagnosis and establishes action-minimality as a representation principle for generalizable VLA adaptation.
    \item We propose an active quantization mechanism that explicitly removes redundant information from VLM representations and controls straight-through gradients to mitigate interference from action supervision.
    \item We introduce a dual-branch action-generation design that uses a stop-gradient unquantized branch as a reference to guide the quantized branch toward preserving action-relevant information while eliminating action-redundant information.
\end{itemize}

\section{Related Work}

\paragraph{Fine-tuning pretrained VLMs for action generation.}

A major line of VLA methods adapts pretrained
VLMs to robot control by further tuning them on action-labeled robot data. These methods are motivated by the observation that representations learned from web-scale vision-language pretraining are not directly optimized for continuous control. Accordingly, existing VLA methods differ less in whether they use action-labeled robot data than in how they expose pretrained visual-language representations to action supervision. 
RT-2 and OpenVLA preserve the autoregressive language-model interface by discretizing continuous robot actions into text or action tokens and fine-tuning the VLM with next-token prediction \cite{zitkovich2023rt,kim2024openvla}. A more recent direction seeks architectural unification by representing vision, language, and action within a single tokenized autoregressive framework, enabling the same sequence model to jointly capture multimodal understanding, temporal dynamics, and action generation \cite{bu2025univla}. Other methods attach explicit policy or action decoders to pretrained VLM features, ranging from RoboFlamingo-style \cite{wang2025roboflamingo} action heads to recent continuous diffusion or flow-matching action experts used in $\pi_0$ and $\pi_{0.5}$ for chunked action generation \cite{black2024pi_0,intelligence2025pi_05}. However, these
approaches largely follow an action-insufficiency view: they treat pretrained
VLM latents as lacking sufficient action-relevant information, and therefore
seek to make the VLM representation more action-aware through further training.

\paragraph{Preventing action learning from interfering with pretrained VLMs.} A complementary line of work starts from an interference-mitigation perspective: instead of fully exposing the pretrained VLM to robot-specific action losses, it aims to preserve its visual-language knowledge while confining action learning to separate or lightweight modules. From this perspective, existing VLAs can be viewed as a spectrum according to how strongly they separate action learning from the pretrained backbone. At the decoding end, compact VLAs such as TinyVLA retain an efficient multimodal backbone and delegate continuous control to a dedicated diffusion policy decoder, concentrating action adaptation on the control-decoding side \citep{wen2025tinyvla}. Moving one step closer to the backbone, adapter-based methods such as VLA-Adapter introduce lightweight bridge or policy modules between visual-language features and the action space, reducing the need to extensively update the pretrained model with robot data \citep{wang2026vla}. At the strongest end of this spectrum, knowledge-insulated VLAs such as Knowledge Insulating VLA and $\pi_{0.7}$ decouple representation learning from continuous control by supervising the VLM with discrete action tokens while training a separate diffusion or flow-matching action expert whose gradients are blocked from the VLM \citep{driess2025knowledge,intelligence2026pi}. Although these designs effectively reduce action-induced interference, they primarily do so by isolating action gradients, rather than by explicitly collapsing action-equivalent VLM latents.

In contrast to both fine-tuning-based and insulation-based VLA adaptation, QuoVLA does not seek to add action information to the VLM or merely block action-induced interference, but instead removes action-redundant variability by learning an action-preserving quotient of pretrained VLM latents.

\section{Quotient Theory for VLA}
\label{sec:theory}

\subsection{Preliminary}

\paragraph{VLA formulation.}
Let \(P_X\) denote the data distribution over visual-language inputs. We write an input as
\(x=(o_{1:K},\ell)\in\mathcal X_{\mathrm{VL}}\), where
\(o_{1:K}\in\mathcal O^K\) is the observation history and
\(\ell\in\mathcal L\) is the language instruction. A VLA policy predicts an action trajectory
\(a_{1:T}\in\mathcal A_{\mathrm{seq}}:=\mathcal A^T\) through a VLM representation map
\(h_\theta\) and an action head \(G_\phi^{\mathrm{act}}\):
\begin{equation}
h=h_\theta(x)\in\mathcal H,
\qquad
\pi_{\theta,\phi}(\cdot\mid x)
:=
G_\phi^{\mathrm{act}}(\cdot\mid h)
\in \Delta(\mathcal A_{\mathrm{seq}}).
\end{equation}
Here \(h_\theta:\mathcal X_{\mathrm{VL}}\to\mathcal H\) maps the input to a VLM latent representation,
\(G_\phi^{\mathrm{act}}:\mathcal H\to\Delta(\mathcal A_{\mathrm{seq}})\) maps a latent representation to a distribution over action trajectories,
and \(\pi_{\theta,\phi}\) denotes the resulting VLA policy parameterized by the VLM parameters \(\theta\) and the action-head parameters \(\phi\). 


\paragraph{Injectivity of VLA representations.}

We invoke the finite-context injectivity result for Transformer representations~\citep{nikolaou2026language}.
In our VLA setting, the multimodal tokenization
\(\tau:\mathcal X_{\mathrm{VL}}\to\mathcal V_{\mathrm{VL}}^{\le K_{\mathrm{VL}}}\)
is injective on the considered finite input set. Applying this result to the composed VLA map
\(G_\phi^{\mathrm{act}}\circ h_\theta\), the resulting policy map
\(\pi_{\theta,\phi}(\cdot\mid x)\) is almost surely injective:
\begin{equation}
x\neq x'
\quad\Longrightarrow\quad
\pi_{\theta,\phi}(\cdot\mid x)
\neq
\pi_{\theta,\phi}(\cdot\mid x'),
\qquad
x,x'\in\mathcal X_{\mathrm{VL}}
\end{equation}

Here \(\pi_{\theta,\phi}(\cdot\mid x)\) denotes the action-trajectory distribution produced by the VLA.  Consequently, the injective VLA map can propagate action-irrelevant visual or linguistic variations into the action space. 
This may assign different action-output distributions to inputs that should share the same optimal control behavior, creating spurious action distinctions and weakening generalization, which motivates the action-induced quotient introduced below.

\subsection{Minimal VLM-Action Quotient}
\label{sec:minimal_vlm_action_quotient}

The injectivity result above implies that a VLA may preserve input differences that are not needed for control. What matters for action generation is not whether two visual-language inputs are different, but whether they require different optimal action behavior. To formalize this, let \(P_{X,A}\) be the joint distribution of visual-language inputs \(X\in\mathcal X_{\mathrm{VL}}\) and expert action trajectories
\(A_{1:T}\in\mathcal A_{\mathrm{seq}}\), with marginal \(P_X\). Given an action loss \(\ell_{\mathrm{act}}\), define the Bayes-optimal action-trajectory law by:
\begin{equation}
p_X^\star(\cdot\mid x)
\in
\arg\min_{\mu\in\Delta(\mathcal A_{\mathrm{seq}})}
\mathbb E_{P_{X,A}}
\bigl[
\ell_{\mathrm{act}}(\mu,A_{1:T})
\mid X=x
\bigr].
\end{equation}

This law represents the optimal action-trajectory distribution for input \(x\). We now compare VLM latents by the optimal action behavior they induce. The Bayes-optimal law depends on \(x\) through the VLM latent
\(h_\theta(x)\), namely $p_X^\star(\cdot\mid x)=p_H^\star(\cdot\mid h_\theta(x))$.  Let \(H=h_\theta(X)\) be the VLM latent induced by \(X\sim P_X\), and let
\(P_H=(h_\theta)_\#P_X\) denote its distribution. On the support
\(\mathcal H_{\mathrm{sup}}:=\operatorname{supp}(P_H)\), define the action
equivalence relation:
\begin{equation}
h\sim_{\mathcal A}h'
\quad\Longleftrightarrow\quad
p_H^\star(\cdot\mid h)
=
p_H^\star(\cdot\mid h').
\end{equation}

Thus, two VLM latents are equivalent if no Bayes-optimal action predictor needs
to distinguish them.

\begin{theorem}[Minimal VLM-Action quotient (Proof detailed in
Appendix~\ref{app:proof_minimal_vlm_action_quotient}.)]
\label{thm:minimal_vlm_action_quotient}
The quotient map
\begin{equation}
q_{\mathcal A}^\star(h):=[h]_{\sim_{\mathcal A}},
\qquad
\mathcal H_{\mathcal A}^\star
:=
\mathcal H_{\mathrm{sup}}/\sim_{\mathcal A}
\end{equation}
is exact action-sufficient: there exists
\(\bar p_{\mathcal A}^\star:
\mathcal H_{\mathcal A}^\star\to\Delta(\mathcal A_{\mathrm{seq}})\)
such that \(
p_X^\star(\cdot\mid x)
=
\bar p_{\mathcal A}^\star
\bigl(q_{\mathcal A}^\star(h_\theta(x))\bigr).
\) Moreover, \(q_{\mathcal A}^\star\) is the coarsest representation with this
property: any exact action-sufficient representation can only further split,
but not merge, its equivalence classes.
\end{theorem}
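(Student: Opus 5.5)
The plan is to treat this as the standard factorization argument for a kernel (fibre) equivalence relation. First, I will make precise what ``exact action-sufficient'' means for a general representation: a map \(r:\mathcal H_{\mathrm{sup}}\to\mathcal R\) is exact action-sufficient if there exists \(\bar p_r:\mathcal R\to\Delta(\mathcal A_{\mathrm{seq}})\) with \(p_H^\star(\cdot\mid h)=\bar p_r(r(h))\) for all \(h\in\mathcal H_{\mathrm{sup}}\). Because of the standing identity \(p_X^\star(\cdot\mid x)=p_H^\star(\cdot\mid h_\theta(x))\), this is equivalent to the statement phrased in terms of \(x\), at least for \(x\) with \(h_\theta(x)\in\mathcal H_{\mathrm{sup}}\), i.e.\ \(P_X\)-almost surely. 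I will fix one selection of the Bayes-optimal law \(p_H^\star\), since the \(\arg\min\) may not be a singleton, so that \(\sim_{\mathcal A}\) is well defined. By construction, \(\sim_{\mathcal A}\) is then the kernel relation of \(p_H^\star\), so it is automatically reflexive, symmetric and transitive.

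For \textbf{sufficiency}, I will define
\begin{equation}
\bar p_{\mathcal A}^\star([h]_{\sim_{\mathcal A}}):=p_H^\star(\cdot\mid h).
\end{equation}
This is well defined because any two representatives \(h\sim_{\mathcal A}h'\) give the same law, by the very definition of the relation. Composing with \(q_{\mathcal A}^\star\circ h_\theta\) and invoking \(p_X^\star(\cdot\mid x)=p_H^\star(\cdot\mid h_\theta(x))\) then yields the claimed factorization. If measurability is required, I will equip \(\mathcal H_{\mathcal A}^\star\) with the quotient \(\sigma\)-algebra. Then \(\bar p_{\mathcal A}^\star\) is measurable exactly because its composition with \(q_{\mathcal A}^\star\), namely \(p_H^\star\), is measurable.

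For \textbf{minimality}, I will take any exact action-sufficient \(r\) with decoder \(\bar p_r\) and show that its fibres refine the classes of \(\sim_{\mathcal A}\). If \(r(h)=r(h')\), then
\begin{equation}
p_H^\star(\cdot\mid h)=\bar p_r(r(h))=\bar p_r(r(h'))=p_H^\star(\cdot\mid h'),
\end{equation}
so \(h\sim_{\mathcal A}h'\). Consequently, each fibre of \(r\) lies inside a single equivalence class, and \(q_{\mathcal A}^\star\) factors through \(r\) via the map \(r(h)\mapsto[h]_{\sim_{\mathcal A}}\), which is well defined on the image of \(r\). This is precisely the statement that \(r\) ``can only split, not merge'' the classes, so \(q_{\mathcal A}^\star\) is the coarsest such representation.

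I expect the main obstacle to be the \textbf{well-posedness} issues rather than the algebra. The first is the non-uniqueness of the \(\arg\min\): I will fix a selection and note that the relation, and hence the theorem, is stated relative to that choice. The second is that the conditional law is only defined \(P_H\)-almost everywhere. I would handle this by restricting to \(\mathcal H_{\mathrm{sup}}\) and working with a fixed version of \(p_H^\star\), so that equalities hold pointwise on the support and, after pulling back through \(h_\theta\), \(P_X\)-almost surely. Once those conventions are fixed, both parts are immediate from the universal property of the quotient by the kernel of a map.
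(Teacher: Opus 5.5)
Your proposal is correct and follows essentially the same route as the paper's proof. You define \(\bar p_{\mathcal A}^\star([h]_{\sim_{\mathcal A}}):=p_H^\star(\cdot\mid h)\), which is well defined by construction of \(\sim_{\mathcal A}\), and you prove minimality by showing that the fibres of any exact action-sufficient map lie inside single classes, so that \(q_{\mathcal A}^\star\) factors through it via \(r(h)\mapsto[h]_{\sim_{\mathcal A}}\).

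Your remarks on fixing a selection from the \(\arg\min\) and on measurability via the quotient \(\sigma\)-algebra are sensible additions that the paper leaves implicit. One small correction: requiring sufficiency pointwise on \(\mathcal H_{\mathrm{sup}}\) implies, but is not in general equivalent to, the \(P_X\)-almost-sure phrasing in \(x\). This is harmless, because, like the paper, you use the pointwise version throughout the minimality step.
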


Theorem~\ref{thm:minimal_vlm_action_quotient} identifies the ideal VLA representation as the action quotient \(q_{\mathcal A}^\star(h_\theta(x))\), rather than the original VLM latent \(h_\theta(x)\). This quotient preserves exactly the latent distinctions that change the Bayes-optimal action-trajectory
law and removes those irrelevant to action generation. Since
\(q_{\mathcal A}^\star\) is not directly available during training, QuoVLA approximates it with a quantized module, while the action-generation objective encourages the resulting representation to remain action-sufficient.

\subsection{Generalization Benefit of the Action Quotient}
\label{sec:generalization_benefit_action_quotient}

The quotient in Theorem~\ref{thm:minimal_vlm_action_quotient} removes latent
distinctions that do not change the Bayes-optimal action law. We now connect
this action-level minimality to generalization. The key point is that redundant
VLM variations can be amplified by the action head into spurious trajectory
differences, increasing the effective complexity of the learned policy. Let \(\mathcal R\) denote a representation class used by the action head, and
let \(\mathcal E_{\mathrm{act}}(\mathcal R)\) be the action-sensitive complexity
defined in Appendix~\ref{app:proof_action_sensitivity_generalization}. This
term measures how much variation in the representation can be translated into
variation of the predicted action trajectories. The appendix establishes the
following bound.

\begin{theorem}[Generalization bound of the action quotient]
\label{thm:action_quotient_generalization}
Assume the action loss is \(L_\ell\)-Lipschitz and the regularity conditions in
Appendix~\ref{app:proof_action_sensitivity_generalization} hold. For a
representation class \(\mathcal R\) used by the action head, the
action-sensitive generalization bound takes the form:
\begin{equation}
\mathrm{Gen}(\mathcal R)
\le
\mathrm{Bd}(\mathcal R)
:=
B_0(\mathcal R)
+
\frac{C L_\ell}{\sqrt T}
\mathcal E_{\mathrm{act}}(\mathcal R),
\end{equation}
where \(\mathrm{Gen}(\mathcal R)\) denotes the generalization gap,
\(B_0(\mathcal R)\) collects the remaining capacity, finite-sample, and
distribution-shift terms, and
\(\mathcal E_{\mathrm{act}}(\mathcal R)\) is the action-sensitive complexity
defined in Appendix~\ref{app:proof_action_sensitivity_generalization}. Let \(\mathcal R_H\) and \(\mathcal R_Q\) denote the representation classes
induced by \(h_\theta(x)\) and \(q_{\mathcal A}^\star(h_\theta(x))\),
respectively. Then the quotient space admits
a no-larger certified generalization bound:
\begin{equation}
\mathrm{Gen}(\mathcal R_Q)
\le
\mathrm{Bd}(\mathcal R_Q)
\le
\mathrm{Bd}(\mathcal R_H).
\end{equation}

Thus, the action quotient yields a no-worse certified bound, with strict improvement when it removes nontrivial action-irrelevant variation.
\end{theorem}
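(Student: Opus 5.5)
The plan has two parts. First, prove the generic bound $\mathrm{Gen}(\mathcal R)\le \mathrm{Bd}(\mathcal R)$ for an arbitrary representation class. Then compare the two ingredients $B_0$ and $\mathcal E_{\mathrm{act}}$ under the quotient map $q_{\mathcal A}^\star$.

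\textbf{Generic bound.} I will go through standard symmetrization and Rademacher complexity. The generalization gap of the composed predictor $G_\phi^{\mathrm{act}}\circ r$ with $r\in\mathcal R$ is bounded by twice the empirical Rademacher complexity of the loss class, plus a finite-sample concentration term (McDiarmid) and a distribution-shift term. Both of these go into $B_0(\mathcal R)$. Since $\ell_{\mathrm{act}}$ is $L_\ell$-Lipschitz in the predicted trajectory, a vector-valued contraction inequality (Maurer) peels off the loss at cost $L_\ell$. This reduces the problem to the complexity of the trajectory-valued class $\{x\mapsto G(r(x))\}$ over $T$ time steps. The action-sensitive complexity $\mathcal E_{\mathrm{act}}(\mathcal R)$ is to be defined in the appendix as a per-step-averaged sensitivity of the decoded trajectory to variation in $r$, for example a supremum of $\sum_t \|\partial G_t/\partial r\|$ times the spread of $r$. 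With that definition, the averaging over $T$ steps of the normalized trajectory loss yields the $1/\sqrt T$ factor. The regularity conditions, namely bounded Jacobians of the action head and a bounded representation spread, are exactly what make this step go through with a universal constant $C$.

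\textbf{Comparison $\mathrm{Bd}(\mathcal R_Q)\le\mathrm{Bd}(\mathcal R_H)$.} I will use Theorem~\ref{thm:minimal_vlm_action_quotient}. Every predictor built on $q_{\mathcal A}^\star(h_\theta(x))$ factors as $\bar G\circ q_{\mathcal A}^\star\circ h_\theta$, so it is also a predictor on $h_\theta(x)$ that is constant on $\sim_{\mathcal A}$-classes. Hence the hypothesis class induced by $\mathcal R_Q$ embeds into that induced by $\mathcal R_H$. Monotonicity of Rademacher complexity and of the capacity terms under class inclusion then gives $B_0(\mathcal R_Q)\le B_0(\mathcal R_H)$. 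For $\mathcal E_{\mathrm{act}}$, the point is that variation of $h$ within an equivalence class contributes to the sensitivity of $\mathcal R_H$ but vanishes identically for $\mathcal R_Q$. I will therefore decompose the variation of $h$ into an across-class part and a within-class part. The across-class part is common to both classes, and the within-class part is nonnegative for $\mathcal R_H$ and zero for $\mathcal R_Q$. This gives $\mathcal E_{\mathrm{act}}(\mathcal R_Q)\le\mathcal E_{\mathrm{act}}(\mathcal R_H)$, with strict inequality whenever some class of positive $P_H$-mass is not a singleton and the action head is nontrivially sensitive on it. Exact action-sufficiency ensures that the Bayes-optimal predictor remains in the quotient class, so the comparison loses no approximation power.

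\textbf{Main obstacle.} The hardest step is making the $\mathcal E_{\mathrm{act}}$ comparison rigorous. Whether the within-class sensitivity can be isolated depends on how $\mathcal E_{\mathrm{act}}$ is defined, and the quotient space carries no natural metric. I would first try to define $\mathcal E_{\mathrm{act}}$ intrinsically through the induced trajectory functions, as a supremum over $x,x'$ of output differences weighted by $P_X$, rather than through latent-space norms. Under that definition, inclusion of the function classes immediately yields monotonicity, and strictness follows from exhibiting a within-class pair on which some $\mathcal R_H$ predictor differs.
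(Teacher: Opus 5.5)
Your skeleton matches the paper's. You use symmetrization and concentration with the leftover terms collected in $B_0$, then vector contraction to strip off $\ell_{\mathrm{act}}$. You then get $\mathcal F_{\mathcal R_Q}\subseteq\mathcal F_{\mathcal R_H}$ by factoring quotient predictors through $q_{\mathcal A}^\star$, and finish with monotonicity under inclusion. The gap is that you never commit to one $\mathcal E_{\mathrm{act}}$ that serves both halves.

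The Jacobian-times-spread definition in your first half is not reached by contraction alone. With the head ranging over $\mathcal G_{\mathrm{act}}$, controlling the Rademacher complexity of $\{x\mapsto g(r(x))\}$ by such a quantity needs a chain-rule or covering argument. As you suspect, it is also not defined on $\mathcal R_Q$. The map $q_{\mathcal A}^\star$ takes values in an abstract set of equivalence classes with no differentiable or metric structure, so the across-class/within-class decomposition has nothing to act on.

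Your fallback, a supremum of output differences $f(x)-f(x')$, is monotone under inclusion. But it no longer bounds the Rademacher term from the first half, because the oscillation of individual functions says nothing about how well the class correlates with noise. For example, a class of constant trajectories has zero oscillation and positive Rademacher complexity.

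The missing idea is to make the Rademacher complexity of the trajectory class itself the definition, as the paper does:
\[
\widehat{\mathcal E}_{\mathrm{act}}(\mathcal R;S_X):=\sqrt T\,\mathbb E_\sigma\Bigl[\sup_{f\in\mathcal F_{\mathcal R}}\frac1n\sum_{i=1}^n\langle\sigma_i,f(x_i)\rangle_T\Bigr].
\]
With this choice the first part is contraction plus a definitional identity. The comparison is then just $\mathcal F_{\mathcal R_Q}\subseteq\mathcal F_{\mathcal R_H}$ plus monotonicity of Rademacher complexity under inclusion, with $B_0$ assumed monotone under restriction of the class.

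Three further points.

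First, the inclusion is not automatic. Calling a quotient predictor ``also a predictor on $h_\theta(x)$'' requires $g\circ q_{\mathcal A}^\star\in\mathcal G_{\mathrm{act}}$ for every admissible quotient head $g$. That is, the raw head class must be closed under precomposition with $q_{\mathcal A}^\star$. The paper states this explicitly as a regularity condition. It is a real restriction for a parametric action expert, whose class need not contain arbitrary functions that are constant on $\sim_{\mathcal A}$-classes.

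Second, the $1/\sqrt T$ does not come from averaging over steps; in the paper it comes from the explicit $\sqrt T$ prefactor in $\mathcal E_{\mathrm{act}}$. Track Maurer's inequality carefully. $L_\ell$-Lipschitzness in $\|\cdot\|_T$ is $(L_\ell/\sqrt T)$-Lipschitzness in the Euclidean norm on $\mathbb R^{T\times D}$. The sum over all $TD$ Rademacher coordinates equals $T\langle\sigma_i,f(x_i)\rangle_T$. Hence $\mathfrak R_n(\ell\circ\mathcal F_{\mathcal R})\le\sqrt2\,L_\ell\,\widehat{\mathcal E}_{\mathrm{act}}(\mathcal R;S_X)$, the factors of $\sqrt T$ cancel, and the paper's ``universal'' contraction constant in fact hides a $\sqrt T$. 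Because the same factor multiplies both sides, $\mathrm{Bd}(\mathcal R_Q)\le\mathrm{Bd}(\mathcal R_H)$ is unaffected. Still, you should not try to derive a genuine $1/\sqrt T$ gain.

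Third, strictness. For a supremum-type complexity, a within-class pair on which some raw predictor differs does not force a strict decrease. Nor does a proper inclusion of classes, since the supremum may already be attained inside $\mathcal F_{\mathcal R_Q}$. The paper simply takes positive action-sensitive complexity of the removed directions as the hypothesis for strict improvement. Your sufficient condition (a non-singleton class of positive mass plus a sensitive head) does not imply it.
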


Theorem~\ref{thm:action_quotient_generalization} shows that the benefit of the
quotient is not compression alone. A quotient improves the certified bound when
it is action-sufficient and reduces the action-sensitive complexity seen by the
action head. Thus, the ideal quotient
\(q_{\mathcal A}^\star(h_\theta(x))\) helps because it preserves the
Bayes-optimal action law while removing latent variations irrelevant to action
generation. QuoVLA approximates this effect with a quantized module,
and the dual-branch action objective encourages the bottleneck representation
to remain action-sufficient.

\section{Methodology}


\begin{figure}
    \centering
    \resizebox{\linewidth}{!}{
    \begin{tikzpicture}[
    x=1cm,y=1cm,
    font=\sffamily,
    >=Latex,
    arrow/.style={-{Latex[length=2.2mm,width=1.6mm]},line width=2pt,draw=black},
    auxarrow/.style={-{Latex[length=2.0mm,width=1.4mm]},line width=2pt,draw=black,dashed},
    sharearrow/.style={<->,>=Latex,line width=2pt,draw=black,dashed},
    module/.style={draw=#1, fill=#1!8, rounded corners=3pt, line width=2pt,
                   align=center, inner sep=3pt, minimum height=0.72cm},
    plus/.style={circle, draw=black, fill=white, line width=2pt,
                 minimum size=0.34cm, inner sep=0pt, font=\small},
    stop/.style={regular polygon, regular polygon sides=8, draw=red!85!black,
                 fill=red!8, line width=2pt, inner sep=1pt,
                 font=\tiny\bfseries, text=red!85!black},
    note/.style={font=\scriptsize\itshape, text=black!65, align=left}
]

\path[fill=white] (0,0) rectangle (24,13.5);


\node[draw=black, fill=black!2, rounded corners=5pt, line width=2pt,minimum width=2.9cm, minimum height=6.5cm] (inputbox) at (1.75,7.60) {};

\node[font=\scriptsize\bfseries] at (1.75,10.3) {RGB Observation};

\begin{scope}
  \path[clip, rounded corners=1.5pt] (0.55,8.20) rectangle (2.95,10.05);
  \node[inner sep=0pt, anchor=center] at (1.75,9.125) {\includegraphics[width=2.40cm,height=1.85cm]{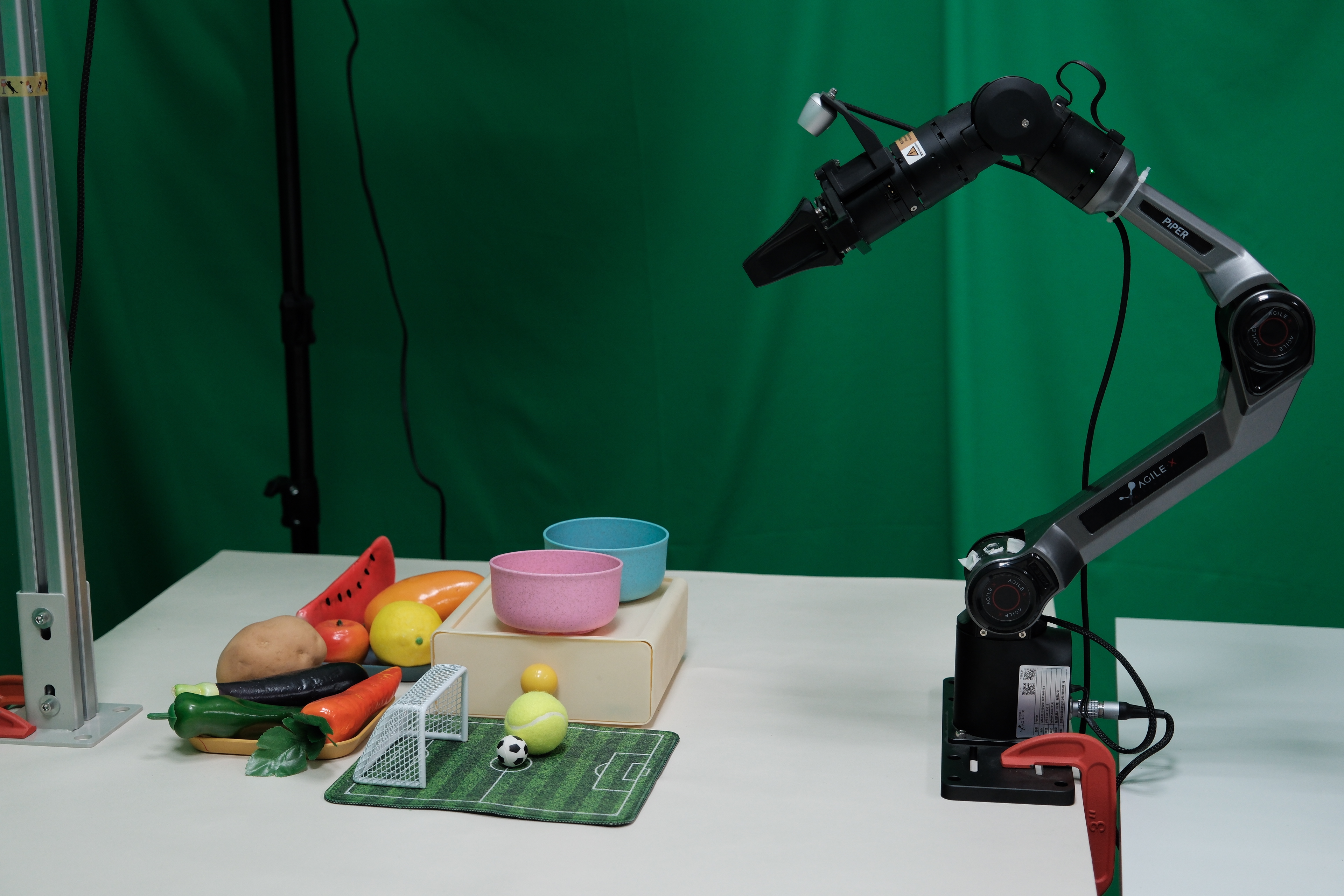}};
\end{scope}

\draw[draw=black, rounded corners=1.5pt]
  (0.55,8.20) rectangle (2.95,10.05);

\node[font=\scriptsize\bfseries] at (1.75,7.1) {Language Instruction};

\node[draw=black,fill=blue!10, rounded corners=4pt, line width=2pt,text width=2.25cm, minimum height=1.50cm, align=center,
font=\scriptsize\bfseries\itshape] (speech) at (1.75,6.075)
{\textbf{``Put the apple\\into the yellow plate.''}};



\node[font=\bfseries\small, align=center] at (1.75,4.80)
{Visual--Language\\Input $x$};

\node[plus] (merge) at (3.5,7.60) {$+$};
\draw[arrow] (3,9.125) -| (merge.north);
\draw[arrow] (3,6.075) -| (merge.south);

\node[draw=black, fill=black!5, rounded corners=4pt,
      line width=2pt, text width=1.65cm, minimum height=4.85cm,
      align=center, font=\bfseries\small] (vlm) at (4.9,7.60)
{\vspace{0.5cm}Pretrained\\VLM};

\draw[arrow] (merge.east) -- (vlm.west);

\begin{scope}[shift={(-0.25,-0.15)}]
\foreach \p/\x/\y in {
a/4.85/9.65,
b/5.32/9.77,
c/5.55/9.31,
d/5.03/9.03,
e/4.67/9.21,
f/5.25/9.27
}
\node[circle, fill=purple!75, inner sep=1.4pt] (\p) at (\x,\y) {};

\foreach \u/\v in {a/b,b/c,c/d,d/e,e/a,a/f,b/f,c/f,d/f,e/f}
\draw[purple!70,line width=0.55pt] (\u)--(\v);
\end{scope}



\coordinate (split) at (6.25,7.60);
\draw[arrow] (vlm.east) -- (split);


\begin{scope}[
shift={(1.5,0)},
rotate around={90:(9.40,10.1)},
transform shape,
qtframe/.style={
draw=black,
fill=black!1,
rounded corners=6pt,
line width=2pt
},
qttitle/.style={
font=\bfseries\color{black}
},
qnode/.style={
draw=black,
rounded corners=4pt,
line width=2pt,
minimum width=3.10cm,
minimum height=0.72cm,
align=center,
font=\small\bfseries,
text=black
},
qplus/.style={
circle,
draw=black,
fill=black!2,
line width=2pt,
inner sep=1.2pt,
font=\small\bfseries,
text=black
},
qquant/.style={
draw=black,
fill=red!10,
rounded corners=4pt,
line width=2pt,
minimum width=3.10cm,
minimum height=0.82cm,
align=center,
font=\small
}
]

\node[qtframe, minimum width=5.05cm, minimum height=8cm]
(qtbg) at (9.40,10.2) {};

\node[qttitle, rotate=-90] at (7.40,10.)
{Quantization Transformer $(L=1)$};

\node[qnode, fill=black!10] (ln1) at (9.40,13.3)
{Layer Norm};

\node[qnode, fill=orange!7] (attn) at (9.40,12.1)
{Multi-Head\\Self-Attention};

\node[qplus] (add1) at (9.40,11) {$+$};

\node[qquant] (qlayer) at (9.40,9.95)
{Quantization Layer\\[-0.5mm]{\scriptsize\bfseries $\widehat Z=Q_b(Z)$, gated STE}};

\node[qnode, fill=black!10] (ln2) at (9.40,8.75)
{Layer Norm};

\node[qnode, fill=red!4] (mlp) at (9.40,7.55)
{MLP};

\node[qplus] (add2) at (9.40,6.65) {$+$};

\coordinate (qin) at (9.40,11.08);

\coordinate (qin)  at ($(ln1.north)+(0,0.22)$);
\coordinate (qout) at ($(add2.south)+(0,-0.28)$);

\draw[arrow] (ln1.south) -- (attn.north);
\draw[arrow] (attn.south) -- (add1.north);
\draw[arrow] (add1.south) -- (qlayer.north);
\draw[arrow] (qlayer.south) -- (ln2.north);
\draw[arrow] (ln2.south) -- (mlp.north);
\draw[arrow] (mlp.south) -- (add2.north);

\draw[-{Latex[length=1.8mm,width=1.2mm]},
      line width=2pt,
      draw=black]
(qin) -- ++(1.75,0) |- (add1.east);

\coordinate (qskip) at ($(qlayer.south)!0.35!(ln2.north)$);

\draw[-{Latex[length=1.8mm,width=1.2mm]},
      line width=2pt,
      draw=black]
(qskip) -- ++(1.75,0) |- (add2.east);

\end{scope}

\draw[arrow] (split) |- (qtbg.north) |- (7.35,10.1);

\node[draw=black, fill=yellow!4, rounded corners=4pt,dashed, line width=2pt, text width=3.10cm,minimum height=1.55cm, align=center, font=\small\bfseries] (bypass) at (11.1,5.1)
{(Bypass quantization)\\[1mm]$H$ (raw)};

\draw[arrow] (split) |- (bypass.west);

\node[draw=black, fill=orange!4, rounded corners=4pt,line width=2pt, text width=2.30cm, minimum height=1.95cm,align=center] (qexpert) at (16.80,10.1)
{{\bfseries Shared\\Action Expert}\\[1mm]
{\scriptsize\bfseries Flow-matching\\action generator}};

\node[draw=black, fill=orange!4, rounded corners=3pt,
      line width=2pt, text width=2.15cm, minimum height=0.83cm,
      align=center, font=\scriptsize\bfseries] (qnoise) at (16.8,12.1)
{{Noisy action state\\$x_\tau,\tau$}};

\draw[arrow] (qnoise.south) -- (qexpert.north);

\node[draw=black, fill=orange!4, rounded corners=4pt,
      dashed, line width=2pt, text width=2.30cm, minimum height=1.95cm,
      align=center] (rexpert) at (16.80,5.1)
{{\bfseries Shared\\Action Expert}\\[1mm]
{\scriptsize\bfseries Flow-matching\\action generator}};

\node[draw=black, fill=orange!4, rounded corners=3pt,
      dashed, line width=2pt, text width=2.15cm, minimum height=0.83cm,
      align=center, font=\scriptsize\bfseries] (rnoise) at (16.80,3.1)
{{Noisy action state\\$x_\tau,\tau$}};

\draw[arrow] (rnoise.north) -- (rexpert.south);

\draw[sharearrow] (16.8,9) --
node[left,font=\scriptsize\bfseries,text=black,align=right]{shared\\weights}
(16.8,6.2);
\draw[arrow] (add2.south |- qexpert.west) --
node[above,font=\scriptsize\bfseries,align=center]
{$\widetilde H$}
(qexpert.west);

\draw[arrow, draw=black, dashed] (bypass.east) -- (rexpert.west);

\node[draw=black, fill=red!10, rounded corners=4pt,
      dashed, line width=2pt, text width=2.70cm,
      minimum height=1.05cm, align=center] (target) at (21.00,13.15)
{Supervision target\\[-0.5mm]$u_\tau=\epsilon-a_{1:T}$};

\node[draw=black, fill=red!10, rounded corners=4pt,
      line width=2pt, text width=3.20cm,
      minimum height=1.35cm, align=center] (lq) at (21.00,10.1)
{Flow-matching loss\\[1mm]$\mathcal L_q=\|v^q-u_\tau\|^2$};

\draw[auxarrow, draw=purple!80!black] (target.south) -- (lq.north);

\draw[arrow] (qexpert.east) --
node[above,font=\footnotesize\bfseries] {$v^q$}
node[below,font=\scriptsize\bfseries,align=center]{quantized\\branch}
(lq.west);

\node[draw=black, fill=orange!4, rounded corners=4pt,
      line width=2pt, text width=4.10cm,
      minimum height=2.85cm, align=center] (tc) at (21,7)
{{\bfseries Relative Temporal\\Complexity Constraint}\\[1mm]
\begin{minipage}{3.75cm}\scriptsize\bfseries
$\bullet$ 1st-order difference\\[-0.2mm]
$\bullet$ 2nd-order difference\\[-0.2mm]
$\bullet$ $\mathcal C(v)$: temporal complexity
\end{minipage}\\[1mm]
{\scriptsize\bfseries $\mathcal L_{tc}=[\mathcal C(\mathcal N(v^q))-\mathcal C(\mathcal N(v^r))]_+$}};

\draw[auxarrow] (18.7,9.4) -| (18.7,9) -| (tc.north);

\draw[auxarrow] (rexpert.east) -- ++(0.4,0) coordinate (tmp);
\draw[auxarrow] (tmp) |- (tc.west);
\node[above,font=\footnotesize\bfseries] at ($(tmp)+(0.2cm,0)$) {$v^r$};
\node[below,font=\scriptsize\bfseries,align=center] at ($(tmp)+(0.2cm,0)$) {raw-prefix\\reference};

\node[stop] (stop) at (14.4,4) {STOP};

\node[font=\scriptsize\bfseries, text=black, align=center] at (14.4,3.4)
{stop-gradient $\mathrm{sg}[\cdot]$};

\node[draw=black, fill=red!10, rounded corners=4pt,
      line width=2.0pt, text width=3.80cm,
      minimum height=1.45cm, align=center] (obj) at (21,3.5)
{{\bfseries Final Objective}\\[1mm]
$\mathcal L=\mathcal L_q+\lambda_{tc}\mathcal L_{tc}$};

\draw[arrow, draw=black, line width=2.0pt] (tc.south) -- (obj.north);






\begin{scope}[shift={(2,1.5)}]  
  \node[draw=black, rounded corners=3pt, dashed, line width=2pt,
        minimum width=4.cm, minimum height=1.5cm] (legend) at (2.5,1.80) {};

  \draw[arrow] (0.75,2.10) -- (1.35,2.10);
  \node[anchor=west,font=\scriptsize\bfseries] at (1.55,2.10) {Forward / main flow};

  \draw[auxarrow] (0.75,1.60) -- (1.35,1.60);
  \node[anchor=west,font=\scriptsize\bfseries] at (1.55,1.60) {Auxiliary / training-only};

\end{scope}
\end{tikzpicture}}
\vspace{-1.9cm}
\caption{
\textbf{Overview of the QuoVLA architecture.}
QuoVLA compresses overcomplete VLM prefix latents into an action-minimal quotient via lightweight Transformer-based prefix quantization, and uses dual-branch action generation with relative temporal-complexity regularization to preserve action-sufficient information while filtering prompt-level redundancy.
}
\label{fig:model}
\end{figure}

\subsection{Transformer-based Prefix Quantization}
\label{sec:prefix_quantization}

To approximate the action-minimal quotient, we insert a lightweight quantized Transformer block between the VLM prefix encoder and the action expert.
Given a visual-language input \(x\), the VLM produces prefix token embeddings
\begin{equation}
H
=
h_\theta(x)
=
[h_1,\ldots,h_M]
\in
\mathbb R^{M\times d},
\end{equation}
where \(M\) is the number of prefix tokens and \(d\) is the hidden dimension. Unlike a standard Transformer block, our block places the quantization layer between the attention sublayer and the MLP sublayer:
\begin{align}
H^{\mathrm{att}}
&=
\mathrm{LN}
\left(
H+\mathrm{MHA}_{\psi}(H)
\right),\\
\widehat H
&=
Q_b\!\left(H^{\mathrm{att}}\right),\\
\widetilde H
&=
\mathrm{LN}
\left(
\widehat H+\mathrm{MLP}_{\psi}(\widehat H)
\right).
\end{align}
Here \(Q_b\) is applied to the attention-updated prefix before it enters the MLP, and \(\widetilde H\) denotes the output prefix of the quantized Transformer block. Let \(Q_b(\cdot)\) be a \(b\)-bit per-token activation quantizer. For a token vector \(z\), we use symmetric uniform quantization:
\begin{equation}
Q_b(z)
=
s\cdot
\mathrm{clip}
\left(
\mathrm{round}\left(\frac{z}{s}\right),
-q_{\max},
q_{\max}
\right),
\qquad
q_{\max}=2^{b-1}-1,
\end{equation}
where $s=\frac{\|z\|_{\infty}}{q_{\max}}$. Since \(Q_b\) contains rounding, its true derivative is zero almost everywhere. We therefore use an adaptive straight-through estimator. Let \(\alpha\) be a trainable gate parameter and define:
\begin{equation}
g
=
g_{\min}
+
(1-g_{\min})\sigma(\alpha),
\qquad
g\in[g_{\min},1],
\end{equation}
where \(\sigma(\cdot)\) is the sigmoid function. We implement the quantized attention activation as
\begin{equation}
\widehat H
=
\operatorname{sg}\!\left(
Q_b(H^{\mathrm{att}})-gH^{\mathrm{att}}
\right)
+
gH^{\mathrm{att}},
\end{equation}
where \(\operatorname{sg}[\cdot]\) denotes stop-gradient. This formulation keeps the forward pass identical to quantization:
\begin{equation}
\widehat H
=
Q_b(H^{\mathrm{att}}),
\end{equation}
while the backward pass uses the gated surrogate gradient:
\begin{equation}
\frac{\partial \widehat H}{\partial H^{\mathrm{att}}}
\approx
gI.
\end{equation}
Thus, gradients from action supervision pass through the attention-to-MLP quantization bottleneck with a learned strength controlled by \(\alpha\). The action expert predicts the action trajectory conditioned on the quantized-block output:
\begin{equation}
\pi_{\phi}^{q}(a_{1:T}\mid x)
=
G_{\phi}^{\mathrm{act}}
\left(
a_{1:T}\mid \widetilde H
\right).
\end{equation}
This design discretizes the intermediate prefix activations before MLP processing, encouraging the model to suppress action-redundant visual-language variations while preserving information needed for action generation.

\subsection{Dual-Branch Action Generation}
\label{sec:dual_branch_action_generation}

Given the raw VLM prefix \(H\) and the quantized prefix \(\widetilde H\), we use two action-generation branches.
The quantized branch is the main training branch, while the non-quantized branch serves as a stop-gradient reference. Following flow matching, for a ground-truth action trajectory \(a_{1:T}\), we sample noise \(\epsilon\) and time \(\tau\), and construct:
\begin{equation}
x_\tau
=
\tau \epsilon
+
(1-\tau)a_{1:T},
\qquad
u_\tau
=
\epsilon-a_{1:T}.
\end{equation}
The quantized branch predicts the velocity field
\begin{equation}
v^q
=
G_\phi^{\mathrm{act}}
\left(
x_\tau,\tau\mid \widetilde H
\right),
\end{equation}
and is trained by the flow-matching loss
\begin{equation}
\mathcal L_q
=
\left\|
v^q-u_\tau
\right\|_2^2 .
\end{equation}

The non-quantized branch predicts a reference velocity field from the raw prefix:
\begin{equation}
v^r
=
\mathrm{sg}
\left[
G_\phi^{\mathrm{act}}
\left(
x_\tau,\tau\mid H
\right)
\right],
\end{equation}
where \(\mathrm{sg}[\cdot]\) denotes stop-gradient.
Thus, the raw branch does not receive gradients from the branch constraint and only provides a reference for the quantized branch.

To prevent quantization from producing overly high-frequency action velocities, we impose a relative temporal-complexity constraint.
For a velocity trajectory \(v\in\mathbb R^{T\times D}\), we define
\begin{equation}
\mathcal C(v)
=
\lambda_1
\frac{1}{(T-1)D}
\sum_{t=1}^{T-1}
\left\|
v_{t+1}-v_t
\right\|_2^2
+
\lambda_2
\frac{1}{(T-2)D}
\sum_{t=1}^{T-2}
\left\|
v_{t+2}-2v_{t+1}+v_t
\right\|_2^2 .
\end{equation}
The first-order term penalizes rapid changes between adjacent velocity steps, while the second-order term penalizes sharp curvature and high-frequency oscillations.
Therefore, \(\mathcal C(v)\) measures the temporal complexity of the predicted velocity field rather than its absolute magnitude. The relative constraint is:
\begin{equation}
\mathcal L_{\mathrm{tc}}
=
\left[
\mathcal C\bigl(\mathcal N(v^q)\bigr)
-
\mathcal C\bigl(\mathcal N(v^r)\bigr)
\right]_+,
\end{equation}
where \([\cdot]_+=\max(\cdot,0)\), and \(\mathcal N(\cdot)\) normalizes action dimensions before computing temporal differences.
This loss penalizes the quantized branch only when its velocity field is more temporally complex than the stop-gradient raw-prefix reference. The final training objective is:
\begin{equation}
\mathcal L
=
\mathcal L_q
+
\lambda_{\mathrm{tc}}\mathcal L_{\mathrm{tc}} .
\end{equation}
This dual-branch design preserves action generation from the quantized prefix while constraining it not to introduce unnecessary temporal complexity beyond the raw VLM reference.

\section{Experiments}

\subsection{Experimental Setting}

\paragraph{Simulation Benchmarks.}
We evaluate QuoVLA on four simulation benchmarks: \textit{LIBERO}~\cite{liu2023libero}, \textit{LIBERO-PRO}~\cite{zhou2025libero}, \textit{LIBERO-Plus}~\cite{fei2025libero}, and \textit{RoboTwin~2.0}~\cite{chen2025robotwin}. We use the official success predicate of each environment and report the task success rate averaged over the evaluated tasks. These benchmarks provide complementary coverage of standard language-conditioned manipulation, generalization under distribution shifts, fine-grained robustness to nuisance perturbations, and diverse bimanual manipulation. Detailed benchmark suites and perturbations are provided in Appendix~\ref{app:sim_benchmark_details}.

\paragraph{Real-robot Setup.}
We further evaluate QuoVLA on four real-robot manipulation tasks: moving a tennis ball from a yellow plate to a blue plate, placing a red cube into a yellow plate, putting an apple on a yellow plate, and removing a cuboid from a blue plate. For each task, we use 100 training samples and train the policy on a single NVIDIA A800 GPU for 5{,}000 steps.

\paragraph{Baselines.}
We take $\pi_{0.5}$~\cite{intelligence2025pi_05} as the primary baseline, since QuoVLA is built on top of it. We compare with Diffusion Policy~\cite{chi2025diffusion}, OpenVLA~\cite{kim2024openvla}, SpatialVLA~\cite{qu2025spatialvla}, CoT-VLA~\cite{zhao2025cot}, UniVLA~\cite{bu2025univla}, OpenVLA-OFT~\cite{kim2025fine}, MemoryVLA~\cite{shi2025memoryvla}, $\pi_{0}$~\cite{black2024pi_0}, VLA-Adapter~\cite{wang2026vla}, ABot-M0~\cite{yang2026abot}, Being-H0.5~\cite{luo2026being}, FLOWER~\cite{reuss2025flower}, X-VLA~\cite{zheng2025x}, VOTE~\cite{lin2025vote}, and SimVLA~\cite{luo2026simvla}.

\paragraph{Evaluation Protocol.}
We report average task success rates using each benchmark's official success predicate. Following recent VLA evaluations \cite{zheng2025x,liang2025discrete}, we train one generalist policy on the union of LIBERO Spatial, Object, Goal, and Long suites and evaluate it on the original tasks. We further test the same LIBERO-trained policy on LIBERO-PRO for zero-shot robustness, and report both zero-shot transfer and in-domain fine-tuning on LIBERO-Plus. For RoboTwin~2.0, we fine-tune on its training tasks and follow the official protocol.

\paragraph{Implementation Details.}
QuoVLA follows the $\pi_{0.5}$ setup unless otherwise specified, using the PaliGemma-2.6B VLM and a 300M action expert. Images are resized to $224 \times 224$; the policy takes one visual-state-language step and predicts a 50-step action chunk, with states and actions padded to 32 dimensions and quantile-normalized. For quotient learning, we use 8-bit prefix quantization with a one-layer, eight-head Transformer and train both quantized and non-quantized action routes. Models are optimized with AdamW using a $2.5\times10^{-5}$ learning rate, $0.01$ weight decay, $1.0$ gradient clipping, and cosine decay with warmup. All simulation runs use 8 NVIDIA A800-80GB GPUs. We train for approximately 10 epochs on the LIBERO-family datasets and for only 5 epochs on RoboTwin~2.0.

\subsection{Main Results}

\begin{table}[t]
\centering
\caption{
Results on LIBERO and LIBERO-PRO. Left: LIBERO success rates (\%), with best/second-best marked by \textbf{bold}/\underline{underline}; right: LIBERO-PRO robustness success rates (\%), with best in \textbf{bold}.
}
\label{tab:libero_and_pro}

\begin{minipage}[t]{0.47\textwidth}
\centering
\textbf{(a) LIBERO benchmark}
\scriptsize
\setlength{\tabcolsep}{2.6pt}
\renewcommand{\arraystretch}{1.05}
\resizebox{\ifdim\width>\linewidth\linewidth\else\width\fi}{!}{%
\begin{tabular}{@{}lccccc@{}}
\toprule
\textbf{Model} & \textbf{Spatial} & \textbf{Object} & \textbf{Goal} & \textbf{Long} & \textbf{Avg} \\
\midrule

Diffusion Policy \cite{chi2025diffusion} & 78.5& 87.5 &73.5& 64.8 &76.1\\
OpenVLA \cite{kim2024openvla}  &84.7 &88.4& 79.2& 53.7& 76.5\\
SpatialVLA \cite{qu2025spatialvla}  &88.2 &89.9& 78.6& 55.5& 78.1\\
CoT-VLA \cite{zhao2025cot}  &87.5& 91.6& 87.6& 69.0& 83.9\\

\hline
UniVLA~\cite{bu2025univla}        & 96.5 & 96.8 & 95.6 & 92.0 & 95.2 \\
OpenVLA-OFT~\cite{kim2025fine} & 97.6 & 98.4 & 97.9 & 94.5 & 97.1 \\
MemoryVLA~\cite{shi2025memoryvla} & 98.4 & 98.4 & 96.4 & 93.4 & 96.7 \\
$\pi_{0}$~\cite{black2024pi_0}     & 96.8 & 98.8 & 95.8 & 85.2 & 94.2 \\
$\pi_{0.5}$~\cite{intelligence2025pi_05}  & {98.8} & 98.2 & {98.0} & 92.4 & 96.9 \\
VLA-Adapter~\cite{wang2026vla}
                                  & 97.8 & {99.2} & 97.2 & 95.0 & 97.3 \\
ABot-M0 \cite{yang2026abot} & 98.8 &\underline{99.8} &\underline{99.0}& {96.6}& \underline{98.6}\\
Being-H0.5 \cite{luo2026being} & 97.0& 98.2 &\underline{99.0}& 96.2 &97.6\\

FLOWER \cite{reuss2025flower} & 97.1 &96.7 &95.6 &93.5 &95.7\\
X-VLA \cite{zheng2025x}& 98.2& 98.6& 97.8& \underline{97.6}& 98.1\\

VOTE \cite{lin2025vote}& 98.8& \underline{99.8}& 97.6& 95.6& 98.0\\

SimVLA  \cite{luo2026simvla}  & \underline{99.6}& \underline{99.8} &98.6 & 96.4 & \underline{98.6} \\
\midrule

\rowcolor{pink!30}
QuoVLA& \textbf{99.8} &\textbf{99.9}&\textbf{100.0}&\textbf{98.7}&\textbf{99.6}\\
\bottomrule
\end{tabular}%
}
\end{minipage}
\hfill
\begin{minipage}[t]{0.51\textwidth}
\centering
\textbf{(b) LIBERO-PRO benchmark}

\scriptsize
\setlength{\tabcolsep}{2.4pt}
\renewcommand{\arraystretch}{1.05}
\resizebox{\ifdim\width>\linewidth\linewidth\else\width\fi}{!}{%
\begin{tabular}{@{}l|l|ccccc@{}}
\toprule
\textbf{Suite} & \textbf{Model} & \textbf{Ori} & \textbf{Obj} & \textbf{Pos} & \textbf{Sem} & \textbf{Task} \\
\midrule

\multirow{3}{*}{Spatial}
& OpenVLA~\cite{kim2024openvla}      & 98.0 & 97.0 & 0.0  & 97.0 & 0.0 \\
& $\pi_{0.5}$~\cite{intelligence2025pi_05}   & 98.0 & 97.0 & 20.0 & 97.0 & {1.0} \\
&SimVLA \cite{luo2026simvla}                  & {99.0} & {98.0} & {29.0} & {98.0} & 0.0 \\
\rowcolor{pink!30}
&QuoVLA   & \textbf{100.0} & \textbf{99.0} & \textbf{40.0} & \textbf{99.0} & \textbf{10.0} \\
\midrule

\multirow{3}{*}{Object}
& OpenVLA~\cite{kim2024openvla}      & 99.0 & {98.0} & 0.0  & 98.0 & 0.0 \\
& $\pi_{0.5}$~\cite{intelligence2025pi_05}   & 98.0 & \textbf{98.0} & {17.0} & 96.0 & 1.0 \\
& SimVLA \cite{luo2026simvla} & {100.0} & 85.0 & 1.0 & {100.0} & {4.0} \\
\rowcolor{pink!30}
& QuoVLA    & \textbf{100.0} & 87.0 & \textbf{26.0} & \textbf{100.0} & \textbf{26.0} \\
\midrule

\multirow{3}{*}{Goal}
& OpenVLA~\cite{kim2024openvla}      & 98.0 & 96.0 & 0.0  & 98.0 & 0.0 \\
& $\pi_{0.5}$~\cite{intelligence2025pi_05}   & 97.0 & \textbf{97.0} & {38.0} & 97.0 & 0.0 \\
& {SimVLA}   & {99.0} & 82.0 & 0.0 & {99.0} & {10.0} \\

& QuoVLA   & \textbf{100.0} & 85.0 & 57.0 & \textbf{100.0} & \textbf{42.0} \\

\midrule

\multirow{3}{*}{Long}
& OpenVLA~\cite{kim2024openvla}      & 93.0 & 81.0 & 0.0 & 96.0 & 0.0 \\
& $\pi_{0.5}$~\cite{intelligence2025pi_05}   & 93.0 & \textbf{92.0} & {8.0} & 93.0 & 1.0 \\
& SimVLA \cite{luo2026simvla}                  & {96.0} & 61.0 & 3.0 & {98.0} & {10.0} \\

\rowcolor{pink!30}
& QuoVLA     & \textbf{99.0} & 82.0 & \textbf{21.0} & \textbf{99.0} & \textbf{24.0} \\

\bottomrule
\end{tabular}%
}
\end{minipage}

\end{table}

\paragraph{Overall performance.}
As shown in Tables~\ref{tab:libero_and_pro},\ref{tab:libero_plus} and \ref{tab:robotwin_subset_quovla}, QuoVLA achieves consistently strong results across all simulation benchmarks. It reaches the best average success rate on LIBERO, obtains the best or tied-best performance in 17 out of 20 LIBERO-PRO robustness settings, and achieves strong results on LIBERO-Plus under both zero-shot transfer and supervised fine-tuning. On RoboTwin~2.0, QuoVLA improves Hard-setting robustness over $\pi_{0.5}$ from 43.84\% to 58.6\% while maintaining competitive Easy-setting performance. These results indicate that the quotient-based action representation improves robustness to distribution shifts by filtering task-irrelevant visual-language variations while preserving action-discriminative information for control.

\begin{table}[t]
\caption{
Comparison on LIBERO-Plus. \textit{Zero-Shot Transfer} directly evaluates LIBERO-trained models on LIBERO-Plus, while \textit{Supervised Fine-Tuning} fine-tunes them on the LIBERO-Plus training split. Best results are in \textbf{bold}.
}
\centering
\scriptsize
\setlength{\tabcolsep}{4.0pt}
\renewcommand{\arraystretch}{1.05}
\resizebox{0.75\linewidth}{!}{%
\begin{tabular}{@{}l|ccccccc|c@{}}
\toprule
\textbf{Models} 
& \textbf{Camera} 
& \textbf{Robot} 
& \textbf{Language} 
& \textbf{Light} 
& \textbf{Background} 
& \textbf{Noise} 
& \textbf{Layout} 
& \textbf{Avg.} \\
\midrule
\multicolumn{9}{c}{\textit{\textbf{Zero-Shot Transfer}}} \\
\midrule

OpenVLA~\cite{kim2024openvla} 
& 0.8 & 3.5 & 23.0 & 8.1 & 34.8 & 15.2 & 28.5 & 15.6 \\

UniVLA~\cite{bu2025univla} 
& 1.8 & 46.2 & 69.6 & 69.0 & 81.0 & 21.2 & 31.9 & 42.9 \\

OpenVLA-OFT~\cite{kim2025fine} 
& 56.4 & 31.9 & 79.5 & 88.7 & 93.3 & 75.8 & 74.2 & 69.6 \\

$\pi_{0}$~\cite{black2024pi_0} 
& 61.0 & 40.8 & 63.5 & 89.3 & 84.1 & 80.1 & 76.4 & 69.4 \\

$\pi_{0.5}$~\cite{intelligence2025pi_05} 
& {75.8} & 79.4 & 83.3 & 95.5 & 95.0 & {89.6} & 87.0 & 85.7 \\

ABot-M0 \cite{yang2026abot} & 60.4 &67.9& 86.4& 96.2 &91.6 &86.4 &82.6& 80.5\\

VLA-JEPA \cite{sun2026vla}&63.3& 67.1 &85.4& 95.6& 93.6& 66.3& 85.1& 79.5\\

ACoT-VLA \cite{zhong2026acot}
& 72.6 & {82.6} & {87.5} & {97.7} & \textbf{96.5} & 87.8 & 88.1 & {86.6} \\

\rowcolor{pink!30}
\textbf{Ours} 
& \textbf{82.3} & \textbf{87.6} & \textbf{88.2} & \textbf{98.2} & {95.9} & \textbf{90.8} & \textbf{89.3} & \textbf{90.3} \\

\midrule
\multicolumn{9}{c}{\textit{\textbf{Supervised Fine-Tuning}}} \\
\midrule

$\pi_{0}$~\cite{black2024pi_0} 
& 79.6 & 21.1 & 72.5 & 84.7 & 86.2 & 68.3 & 69.4 & 67.4 \\

$\pi_{0.5}$~\cite{intelligence2025pi_05} 
& 70.3 & 41.7 & {81.1} & \textbf{97.3} & 94.6 & 71.8 & 84.9 & 75.7 \\

ACoT-VLA \cite{zhong2026acot}
& {96.6} & {70.4} & 79.7 & 95.1 & {97.1} & \textbf{95.9} & {85.0} & {88.0} \\

\rowcolor{pink!30}
\textbf{Ours} 
& \textbf{98.4} & \textbf{73.5} & \textbf{83.7} & 96.3 & \textbf{99.0} & 94.8 & \textbf{90.4} & \textbf{90.9} \\

\bottomrule
\end{tabular}%
}

\label{tab:libero_plus}
\end{table}

\colorlet{bestpurple}{pink!30}
\newcommand{\hl}[1]{\cellcolor{bestpurple}#1}

\begin{table*}[t]
\centering
\caption{Subset of RoboTwin 2.0 benchmark.}
\label{tab:robotwin_subset_quovla}
\setlength{\tabcolsep}{4pt}
\renewcommand{\arraystretch}{1.08}
\resizebox{0.75\textwidth}{!}{
\begin{tabular}{lcccccccccc}
\toprule
Simulation Task
& \multicolumn{2}{c}{ACT}
& \multicolumn{2}{c}{$\pi_{0}$}
& \multicolumn{2}{c}{$\pi_{0.5}$}
& \multicolumn{2}{c}{DP3}
& \multicolumn{2}{c}{QuoVLA} \\
\cmidrule(lr){2-3}
\cmidrule(lr){4-5}
\cmidrule(lr){6-7}
\cmidrule(lr){8-9}
\cmidrule(lr){10-11}
& Easy & Hard
& Easy & Hard
& Easy & Hard
& Easy & Hard
& Easy & Hard \\
\midrule

\textit{Adjust Bottle}
& 97\% & 23\%
& 90\% & 56\%
& 12\% & 7\%
& \hl{99\%} & 3\%
& 18\% & \hl{68\%} \\

\textit{Beat Block Hammer}
& 56\% & 3\%
& 43\% & 21\%
& 16\% & 18\%
& \hl{72\%} & 8\%
& 20\% & \hl{42\%} \\

\textit{Blocks Ranking RGB}
& 1\% & 0\%
& 19\% & 5\%
& 48\% & 56\%
& 3\% & 0\%
& \hl{50\%} & \hl{62\%} \\

\textit{Blocks Ranking Size}
& 0\% & 0\%
& 7\% & 1\%
& 42\% & 38\%
& 2\% & 0\%
& \hl{43\%} & \hl{45\%} \\

\textit{Click Alarmclock}
& 32\% & 4\%
& 63\% & 11\%
& 74\% & 65\%
& \hl{77\%} & 14\%
& 75\% & \hl{72\%} \\

\textit{Click Bell}
& 58\% & 3\%
& 44\% & 3\%
& 25\% & 36\%
& \hl{90\%} & 0\%
& 27\% & \hl{48\%} \\

\textit{Dump Bin Bigbin}
& 68\% & 1\%
& 83\% & 24\%
& 33\% & 29\%
& \hl{85\%} & 53\%
& 35\% & \hl{60\%} \\

\textit{Grab Roller}
& 94\% & 25\%
& 96\% & 80\%
& 10\% & 6\%
& \hl{98\%} & 2\%
& 14\% & \hl{86\%} \\

\textit{Handover Block}
& 42\% & 0\%
& 45\% & 8\%
& 43\% & 35\%
& \hl{70\%} & 0\%
& 45\% & \hl{50\%} \\

\addlinespace[0.6em]
\multicolumn{11}{c}{$\cdots$} \\
\addlinespace[0.6em]

\textit{Move Pillbottle Pad}
& 0\% & 0\%
& 21\% & 1\%
& 35\% & 37\%
& \hl{41\%} & 0\%
& 36\% & \hl{45\%} \\

\midrule
\textit{Average (in \%)}
& 29.7 & 1.7
& 46.4 & 16.3
& 42.98 & 43.84
& \hl{55.2} & 5.0
& 45.1 & \hl{58.6} \\

\bottomrule
\end{tabular}
}
\end{table*}

\begin{table}[t]
\centering
\caption{
Real-robot evaluation results. We report the task success rate (\%), and $\Delta$ denotes the absolute improvement of QuoVLA over $\pi_{0.5}$.
}
\label{tab:real_robot_results}
\small
\setlength{\tabcolsep}{5pt}
\begin{tabularx}{0.7\linewidth}{Xccc}
\toprule
Real-robot Task & $\pi_{0.5}$ & QuoVLA & $\Delta$ \\
\midrule
Move tennis ball from yellow plate to blue plate & 93/100 & 97/100 & +4\% \\
Pick red cube into yellow plate & 48/100 & 74/100 & +26\% \\
Put apple on yellow plate & 31/100 & 83/100 & +52\% \\
Remove cuboid from blue plate & 86/100 & 98/100 & +12\% \\
\midrule
Average & 64.5\% & 88\% & +23.5\% \\
\bottomrule
\end{tabularx}
\end{table}

\paragraph{Real-robot performance.}
Table~\ref{tab:real_robot_results} summarizes real-robot success rates on four manipulation tasks. Each task uses 100 training samples and is trained for 5{,}000 steps on a single NVIDIA A800 GPU. QuoVLA improves the average success rate from 64.5\% to 88.0\%, yielding a +23.5\% gain over $\pi_{0.5}$. The improvements are especially large on precise placement tasks, such as picking the red cube into the yellow plate (+26\%) and putting the apple on the yellow plate (+52\%), while QuoVLA also improves the already strong baseline on tennis-ball transfer (+4\%) and cuboid removal (+12\%). These results show that the quotient-based action representation improves data-efficient real-robot adaptation, particularly under contact and placement uncertainty.

\subsection{Ablation Study}

\paragraph{Ablation analysis.}
Table~\ref{tab:libero_ablation} ablates the main components of QuoVLA on LIBERO. The default setting achieves the best average success rate of 99.6\%. Increasing the quantization depth from $L_q=1$ to $L_q=2$ and $L_q=6$ lowers the average performance to 97.7\% and 95.2\%, suggesting that excessive prefix processing may distort action-relevant structure. Increasing the bit-width from $b_q=8$ to $b_q=16$ also reduces the average success rate to 97.1\%, indicating that a proper quantization bottleneck is beneficial for quotienting out action-redundant prompt variations. Without Adaptive STE, performance drops slightly to 98.45\%, confirming the importance of stable gradient estimation for the discrete bottleneck. Removing the dual-branch design or its associated constraints yields 98.65\% and 98.78\%, respectively, showing that these two components complement the quantization module by preserving action-sufficient structure during action generation.

\begin{figure}[t]
\centering

\begin{minipage}[t]{0.58\textwidth}
\centering
\vspace{0pt}
\scriptsize
\captionof{table}{
Ablations on LIBERO.
Each row corresponds to one ablation setting, with all other hyperparameters fixed to their default values.
}
\label{tab:libero_ablation}
\footnotesize
\setlength{\tabcolsep}{5.0pt}
\renewcommand{\arraystretch}{1.08}

\resizebox{\linewidth}{!}{
\begin{tabular}{@{}l|l|ccccc@{}}
\toprule
\textbf{Knob} 
& \textbf{Value} 
& \textbf{Spatial} 
& \textbf{Object} 
& \textbf{Goal} 
& \textbf{Long} 
& \textbf{Avg} \\
\midrule

\rowcolor{pink!30}
\textbf{QuoVLA}
& Default settings
& \textbf{99.8}
& \textbf{99.9}
& \textbf{100.0}
& \textbf{98.7}
& \textbf{99.6} \\

\midrule
\multicolumn{7}{@{}l@{}}{
\textit{Quantization parameters}
\quad
(default: $L_q=1$, $b_q=8$, Standard STE)
} \\
\midrule

w/o Quantization & --
& 98.8 & 98.2 & 98.0 & 92.4 & 96.85\\
\midrule

\multirow{3}{*}{Quantization depth $L_q$}
& $L_q=1$ 
& 99.8 & 99.9 & 100.0& 98.7 & 99.6 \\
& $L_q=2$
& 96.4 & 96.1 & 95.6 & 94.8 & 95.7 \\
& $L_q=6$
& 93.7 & 92.9 & 91.8 & 90.5 & 92.2 \\

\midrule
\multirow{3}{*}{Quantization bit-width $b_q$}
& $b_q=4$ 
& 84.4 & 87.3 & 85.0 & 73.7 & 82.6 \\
& $b_q=8$ 
& 99.8 & 99.9 & 100.0& 98.7 & 99.6 \\
& $b_q=16$ 
& 98.3 & 98.8 & 97.2 & 94.1 & 97.1 \\

\midrule

w/o Adaptive STE & --
& 98.4 & 98.5 & 99.1 & 97.8 & 98.45 \\

\midrule
\multicolumn{7}{@{}l@{}}{
\textit{Action-generation design}
\quad
(default: dual-branch action generation with constraints)
} \\
\midrule

w/o dual-branch & --
& 98.4 & 98.6 & 99.2 & 98.4 & 98.65 \\

w/o Constraints & --
& 98.5 & 98.6 & 99.7 & 98.3 & 98.78 \\

\bottomrule
\end{tabular}
}
\end{minipage}
\hfill
\begin{minipage}[t]{0.38\textwidth}
\centering
\vspace{0pt}

\resizebox{\linewidth}{!}{
\begin{tikzpicture}
\begin{axis}[
    width=1\linewidth,
    height=0.75\linewidth,
    xlabel={Gaussian Noise Std. $\sigma$},
    ylabel={Average Success Rate (\%)},
    xmin=0,
    xmax=0.10,
    ymin=10,
    ymax=102,
    xtick={0,0.02,0.04,0.06,0.08,0.10},
    xticklabels={0,0.2,0.4,0.6,0.8,1.0},
    ytick={20,40,60,80,100},
    grid=major,
    major grid style={draw=gray!25},
    tick label style={font=\scriptsize},
    label style={font=\scriptsize},
    legend style={
        at={(0.5,-0.3)},
        anchor=north,
        legend columns=4,
        draw=none,
        font=\scriptsize
    },
    every axis plot/.append style={
        line width=1.2pt,
        mark size=2.2pt
    },
]

\addplot+[mark=o] coordinates {
    (0.00,76.5)
    (0.01,75.8)
    (0.02,73.2)
    (0.03,70.4)
    (0.04,65.7)
    (0.05,61.8)
    (0.06,52.4)
    (0.07,45.6)
    (0.08,36.1)
    (0.09,29.4)
    (0.10,19.2)
};
\addlegendentry{OpenVLA}

\addplot+[mark=square*] coordinates {
    (0.00,94.2)
    (0.01,93.6)
    (0.02,91.0)
    (0.03,88.4)
    (0.04,84.7)
    (0.05,77.9)
    (0.06,71.5)
    (0.07,63.1)
    (0.08,55.8)
    (0.09,44.6)
    (0.10,35.2)
};
\addlegendentry{$\pi_{0}$}

\addplot+[mark=triangle*] coordinates {
    (0.00,96.9)
    (0.01,96.1)
    (0.02,95.2)
    (0.03,93.0)
    (0.04,91.3)
    (0.05,87.5)
    (0.06,83.8)
    (0.07,78.6)
    (0.08,72.4)
    (0.09,66.8)
    (0.10,59.7)
};
\addlegendentry{$\pi_{0.5}$}

\addplot+[very thick, mark=star] coordinates {
    (0.00,99.6)
    (0.01,99.0)
    (0.02,98.4)
    (0.03,97.6)
    (0.04,96.2)
    (0.05,94.8)
    (0.06,92.1)
    (0.07,90.5)
    (0.08,86.7)
    (0.09,84.1)
    (0.10,80.4)
};
\addlegendentry{QuoVLA}

\end{axis}
\end{tikzpicture}
}

\captionof{figure}{
Noise robustness under increasing Gaussian corruption. We evaluate all methods on the four LIBERO suites and report the average success rate. Gaussian noise with standard deviation $\sigma$ is added to the RGB observations, where larger $\sigma$ indicates stronger visual corruption.
}
\label{fig:noise_robustness_curve}
\end{minipage}

\end{figure}

\paragraph{Perturbation-wise robustness curve.} 
As shown in Figure~\ref{fig:noise_robustness_curve}, we evaluate three baselines and our model on the four LIBERO suites and report the average success rate under increasing Gaussian noise. As the noise level increases, all methods exhibit progressively larger degradation, indicating that stronger visual corruption leads to a more severe distribution shift. Among the baselines, OpenVLA and $\pi_{0}$ deteriorate much faster, while $\pi_{0.5}$ shows relatively better robustness. In contrast, QuoVLA consistently maintains the highest curve and degrades the slowest over the entire noise range. This suggests that QuoVLA is substantially less sensitive to observation noise. The advantage mainly comes from our quotient-based action representation, which suppresses task-irrelevant visual perturbations while preserving action-discriminative information for stable action prediction.

\section{Limitations}

QuoVLA relies on a sufficiently strong pretrained VLM whose prefix representations already encode action-relevant information. When the base VLM is weak or poorly aligned with the target robot domain, quotienting can suppress redundant variations but cannot fully compensate for missing action cues. In addition, the quantization bottleneck requires proper capacity to avoid distorting useful action structure. Our real-robot evaluation is also limited to tabletop manipulation with a single hardware setup, leaving broader embodiments and longer-horizon open-world tasks for future work.

\section{Conclusion}

We presented QuoVLA, a quotient-space framework for adapting pretrained VLM representations to robot control. Our theory reframes VLA adaptation by showing that pretrained VLM latents can be action-sufficient yet overcomplete, motivating the removal of action-redundant visual-language variations. QuoVLA operationalizes this idea with transformer-based prefix quantization and dual-branch action generation with temporal-complexity regularization.

\bibliographystyle{plainnat}  
\bibliography{references}


\clearpage
\appendix

\section{Appendix}

\subsection{Simulation Benchmark Details}
\label{app:sim_benchmark_details}

We evaluate QuoVLA on four simulation benchmarks: \textit{LIBERO}~\cite{liu2023libero}, \textit{LIBERO-PRO}~\cite{zhou2025libero}, \textit{LIBERO-Plus}~\cite{fei2025libero}, and \textit{RoboTwin~2.0}~\cite{chen2025robotwin}. We use the official success predicate of each environment and report the task success rate averaged over the evaluated tasks.

\paragraph{LIBERO.}
The original \textit{LIBERO} benchmark focuses on knowledge transfer in language-conditioned manipulation. We evaluate on its standard suites, including LIBERO-Spatial, LIBERO-Object, LIBERO-Goal, and LIBERO-Long.

\paragraph{LIBERO-PRO.}
\textit{LIBERO-PRO} tests generalization beyond memorization in LIBERO-style tasks. We evaluate four splits: Object, Position, Semantic, and Task, covering visual object shifts, spatial relocation, language paraphrasing, and task-logic changes.

\paragraph{LIBERO-Plus.}
\textit{LIBERO-Plus} is used for fine-grained robustness analysis. Unlike LIBERO-PRO, which evaluates broader generalization splits, LIBERO-Plus perturbs individual nuisance factors in LIBERO tasks. We evaluate seven perturbations: object layout, camera viewpoint, robot initial state, language instruction, lighting, background texture, and sensor noise.

\paragraph{RoboTwin~2.0.}
\textit{RoboTwin~2.0} evaluates robustness beyond single-arm LIBERO tasks, covering diverse bimanual manipulation scenarios with domain randomization over clutter, lighting, background, tabletop height, and language instructions.

\subsection{Injectivity for VLA Representations}
\label{app:injectivity_vla}

\begin{knownresult}[Restated from Nikolaou et al.~\cite{nikolaou2026language}, Theorem C.1]
Fix a finite vocabulary $\mathcal V$, a context bound $K \in \mathbb N$, and a time horizon $T \in \mathbb N$. Consider a causal Transformer Language Model (TLM) with parameter space $\mathbb R^p$, output map $f(\cdot;\theta)$, and last-token, last-layer representation map $\mathbf r(\cdot;\theta)$. Let$\left\{\left(s_t \in \mathcal V^{\le K},p_t \in \Delta^{|\mathcal V|-1}\right)\right\}_{t=1}^{T}$ be any sequence of samples, and let
$\left\{\eta_t \in (0,1)\right\}_{t=1}^{T}$ be any sequence of step-sizes. Assume the parameters are randomly initialized and updated by gradient descent:
\begin{equation*}
\theta_0 \sim \mu,
\qquad
\mu \ll \mathrm{Leb}_p,
\end{equation*}
\begin{equation*}
\theta_{t+1}
=
\theta_t
-
\eta_t
\nabla
\mathcal L_{s_t,p_t}(\theta_t),
\end{equation*}
where $\mathrm{Leb}_p$ denotes Lebesgue measure on $\mathbb R^p$ and $\mathcal L_{s,p}:\mathbb R^p \to \mathbb R$ is the standard cross-entropy loss:
\begin{equation*}
\mathcal L_{s,p}(\theta)
=
\mathrm{CrossEntropy}
\bigl(
f(s;\theta),
p
\bigr).
\end{equation*}
Then, with probability one over the draw of $\theta_0$, the last-token, last-layer representation map
\begin{equation*}
\mathcal V^{\le K}
\ni
s
\longmapsto
\mathbf r(s;\theta_T)
\in
\mathbb R^d
\end{equation*}
is injective. Equivalently,
\begin{equation*}
\Pr
\left[
\exists\,
s \neq t
\in
\mathcal V^{\le K}
:
\mathbf r(s;\theta_T)
=
\mathbf r(t;\theta_T)
\right]
=
0,
\end{equation*}
where $\mathbf r(\cdot;\theta_T)$ denotes the last-token, last-layer representation of the TLM after $T$ gradient descent steps.
\end{knownresult}

\paragraph{From the known result to VLA policies.}
We now explain how the above known result is used in our VLA setting. 
Let
\begin{equation*}
\mathcal X_{\mathrm{VL}}
\subseteq
\mathcal O \times \mathcal L
\end{equation*}
denote the finite set of visual-language inputs considered in training or evaluation, where \(o \in \mathcal O\) is a visual observation and \(\ell \in \mathcal L\) is a language instruction. We write each input as
\begin{equation*}
x=(o,\ell)\in \mathcal X_{\mathrm{VL}}.
\end{equation*}
A VLA policy can be decomposed into a VLM representation map followed by an action head:
\begin{equation*}
\pi_{\theta,\phi}^{\mathrm{VLA}}
=
G_{\phi}^{\mathrm{act}}
\circ
F_{\theta}^{\mathrm{VLM}},
\end{equation*}
where
\begin{equation*}
F_{\theta}^{\mathrm{VLM}}
:
\mathcal X_{\mathrm{VL}}
\longrightarrow
\mathbb R^d
\end{equation*}
maps a visual-language input to a VLM representation, and
\begin{equation*}
G_{\phi}^{\mathrm{act}}
:
\mathbb R^d
\longrightarrow
\mathcal Y_{\mathrm{act}}
\end{equation*}
maps the VLM representation to an action output, such as an action vector, action logits, or action-distribution parameters.

For a VLM whose multimodal tokens are processed by a causal decoder backbone, the visual-language input is first converted into a bounded multimodal token context. We denote this conversion by
\begin{equation*}
\tau:
\mathcal X_{\mathrm{VL}}
\longrightarrow
\mathcal V_{\mathrm{VL}}^{\le K_{\mathrm{VL}}},
\end{equation*}
where \(\mathcal V_{\mathrm{VL}}\) is a finite multimodal vocabulary containing visual tokens, language tokens, and special modality or separator tokens. The VLM representation can then be written as
\begin{equation*}
F_{\theta}^{\mathrm{VLM}}(x)
=
\mathbf r(\tau(x);\theta),
\qquad
x\in\mathcal X_{\mathrm{VL}},
\end{equation*}
where \(\mathbf r(\cdot;\theta)\) is the last-token representation map of the causal decoder.

The known result is originally stated for decoder-only language Transformers, but the mathematical object it controls is the finite-context representation map of a causal decoder:
\begin{equation*}
s
\longmapsto
\mathbf r(s;\theta_T),
\qquad
s\in \mathcal V^{\le K}.
\end{equation*}
Therefore, after replacing the language vocabulary and context length by the multimodal vocabulary and context length,
\begin{equation*}
\mathcal V
=
\mathcal V_{\mathrm{VL}},
\qquad
K
=
K_{\mathrm{VL}},
\end{equation*}
the same decoder-level conclusion gives
\begin{equation*}
\Pr_{\theta_0}
\left[
\exists\,
s\neq s'
\in
\mathcal V_{\mathrm{VL}}^{\le K_{\mathrm{VL}}}
:
\mathbf r(s;\theta_T)
=
\mathbf r(s';\theta_T)
\right]
=
0.
\end{equation*}

Assume that the multimodal conversion map \(\tau\) is injective on the considered visual-language input set:
\begin{equation*}
x\neq x'
\quad
\Longrightarrow
\quad
\tau(x)\neq \tau(x'),
\qquad
x,x'\in \mathcal X_{\mathrm{VL}}.
\end{equation*}
Then the VLM representation map is almost surely injective on \(\mathcal X_{\mathrm{VL}}\). Indeed, for any \(x\neq x'\), the injectivity of \(\tau\) gives
\begin{equation*}
\tau(x)\neq \tau(x'),
\end{equation*}
and the decoder-level injectivity gives
\begin{equation*}
\mathbf r(\tau(x);\theta_T)
\neq
\mathbf r(\tau(x');\theta_T)
\end{equation*}
with probability one. Hence,
\begin{equation*}
F_{\theta_T}^{\mathrm{VLM}}(x)
\neq
F_{\theta_T}^{\mathrm{VLM}}(x'),
\end{equation*}
and therefore
\begin{equation*}
\Pr_{\theta_0}
\left[
\exists\,
x\neq x'
\in
\mathcal X_{\mathrm{VL}}
:
F_{\theta_T}^{\mathrm{VLM}}(x)
=
F_{\theta_T}^{\mathrm{VLM}}(x')
\right]
=
0.
\end{equation*}
Thus, under the finite-context assumptions of the known result, the VLM representation map is injective on the considered visual-language inputs.

We next consider the action head. Define the finite set of VLM representations induced by \(\mathcal X_{\mathrm{VL}}\) as
\begin{equation*}
\mathcal H_{\theta_T}
=
F_{\theta_T}^{\mathrm{VLM}}
\left(
\mathcal X_{\mathrm{VL}}
\right)
=
\left\{
F_{\theta_T}^{\mathrm{VLM}}(x)
:
x\in \mathcal X_{\mathrm{VL}}
\right\}
\subseteq
\mathbb R^d.
\end{equation*}
We require the action head to be injective on this induced representation set:
\begin{equation*}
h\neq h',
\quad
h,h'\in \mathcal H_{\theta_T}
\quad
\Longrightarrow
\quad
G_{\phi}^{\mathrm{act}}(h)
\neq
G_{\phi}^{\mathrm{act}}(h').
\end{equation*}
This is a finite-set injectivity condition on the representations that actually arise from the VLM, rather than a global injectivity claim on all of \(\mathbb R^d\).

For standard non-degenerate affine or MLP action heads, this finite-set condition is generic. Since \(\mathcal H_{\theta_T}\) is finite, for any fixed pair \(h\neq h'\in \mathcal H_{\theta_T}\), the collision event
\begin{equation*}
G_{\phi}^{\mathrm{act}}(h)
=
G_{\phi}^{\mathrm{act}}(h')
\end{equation*}
defines a lower-dimensional subset of the action-head parameter space under continuous, non-degenerate parameterization. Taking a finite union over all pairs \(h\neq h'\) in \(\mathcal H_{\theta_T}\), the probability of an action-head collision is zero under absolutely continuous initialization of \(\phi\). Hence,
\begin{equation*}
\Pr_{\phi}
\left[
\exists\,
h\neq h'
\in
\mathcal H_{\theta_T}
:
G_{\phi}^{\mathrm{act}}(h)
=
G_{\phi}^{\mathrm{act}}(h')
\right]
=
0.
\end{equation*}

Combining the injectivity of the VLM representation map and the injectivity of the action head on the induced representation set, we obtain
\begin{equation*}
x\neq x'
\quad
\Longrightarrow
\quad
F_{\theta_T}^{\mathrm{VLM}}(x)
\neq
F_{\theta_T}^{\mathrm{VLM}}(x')
\quad
\Longrightarrow
\quad
G_{\phi}^{\mathrm{act}}
\left(
F_{\theta_T}^{\mathrm{VLM}}(x)
\right)
\neq
G_{\phi}^{\mathrm{act}}
\left(
F_{\theta_T}^{\mathrm{VLM}}(x')
\right).
\end{equation*}
Therefore, the full VLA mapping
\begin{equation*}
\pi_{\theta_T,\phi}^{\mathrm{VLA}}
=
G_{\phi}^{\mathrm{act}}
\circ
F_{\theta_T}^{\mathrm{VLM}}
\end{equation*}
is injective on the considered visual-language input set \(\mathcal X_{\mathrm{VL}}\):
\begin{equation*}
\Pr
\left[
\exists\,
x\neq x'
\in
\mathcal X_{\mathrm{VL}}
:
\pi_{\theta_T,\phi}^{\mathrm{VLA}}(x)
=
\pi_{\theta_T,\phi}^{\mathrm{VLA}}(x')
\right]
=
0.
\end{equation*}

In summary, the known result is used to justify the injectivity of the VLM representation map
\begin{equation*}
F_{\theta_T}^{\mathrm{VLM}}
:
x
\longmapsto
\mathbf r(\tau(x);\theta_T),
\end{equation*}
while the finite-set injectivity of the action head ensures that distinct VLM representations remain distinct after being mapped to action outputs. Consequently, on the finite set of visual-language inputs considered in our VLA setting, the composition
\begin{equation*}
G_{\phi}^{\mathrm{act}}
\circ
F_{\theta_T}^{\mathrm{VLM}}
\end{equation*}
is injective.

\subsection{Proof of the Minimal VLM--Action Quotient}
\label{app:proof_minimal_vlm_action_quotient}

\begin{proof}
We prove the statement on the support
\(\mathcal H_{\mathrm{sup}}=\operatorname{supp}(P_H)\). Fix the
Bayes-optimal action-trajectory law \(p_X^\star\) used in
Section~\ref{sec:minimal_vlm_action_quotient}. By assumption, this law depends
on the input \(x\) only through its VLM latent \(h_\theta(x)\), namely
\[
p_X^\star(\cdot\mid x)
=
p_H^\star(\cdot\mid h_\theta(x))
\]
for \(P_X\)-almost every \(x\).

First, observe that \(\sim_{\mathcal A}\) is an equivalence relation on
\(\mathcal H_{\mathrm{sup}}\). Indeed, it is defined by equality of the induced
Bayes-optimal action-trajectory laws:
\[
h\sim_{\mathcal A}h'
\quad\Longleftrightarrow\quad
p_H^\star(\cdot\mid h)=p_H^\star(\cdot\mid h').
\]
Hence reflexivity, symmetry, and transitivity follow directly from the
corresponding properties of equality.

Let
\[
q_{\mathcal A}^\star(h)=[h]_{\sim_{\mathcal A}},
\qquad
\mathcal H_{\mathcal A}^\star
=
\mathcal H_{\mathrm{sup}}/\sim_{\mathcal A}.
\]
We first show that \(q_{\mathcal A}^\star\) is exact action-sufficient. Define
\[
\bar p_{\mathcal A}^\star
:
\mathcal H_{\mathcal A}^\star
\to
\Delta(\mathcal A_{\mathrm{seq}})
\]
by
\[
\bar p_{\mathcal A}^\star\bigl([h]_{\sim_{\mathcal A}}\bigr)
:=
p_H^\star(\cdot\mid h).
\]
This definition is well-defined: if \(h\) and \(h'\) belong to the same
equivalence class, then \(h\sim_{\mathcal A}h'\), and therefore
\[
p_H^\star(\cdot\mid h)
=
p_H^\star(\cdot\mid h').
\]
Thus the value of \(\bar p_{\mathcal A}^\star\) does not depend on the chosen
representative of the equivalence class. Consequently, for \(P_X\)-almost every
\(x\),
\[
\bar p_{\mathcal A}^\star
\bigl(q_{\mathcal A}^\star(h_\theta(x))\bigr)
=
\bar p_{\mathcal A}^\star
\bigl([h_\theta(x)]_{\sim_{\mathcal A}}\bigr)
=
p_H^\star(\cdot\mid h_\theta(x))
=
p_X^\star(\cdot\mid x).
\]
Therefore \(q_{\mathcal A}^\star\) preserves exactly the information needed to
recover the Bayes-optimal action-trajectory law, and is exact
action-sufficient.

It remains to prove minimality. Let
\[
s:\mathcal H_{\mathrm{sup}}\to\mathcal S
\]
be any exact action-sufficient representation. By definition, there exists a
decoder
\[
\bar p_s:\mathcal S\to\Delta(\mathcal A_{\mathrm{seq}})
\]
such that, for every \(h\in\mathcal H_{\mathrm{sup}}\),
\[
p_H^\star(\cdot\mid h)=\bar p_s(s(h)).
\]
Consider any two latents \(h,h'\in\mathcal H_{\mathrm{sup}}\) such that
\(s(h)=s(h')\). Then exact action-sufficiency of \(s\) gives
\[
p_H^\star(\cdot\mid h)
=
\bar p_s(s(h))
=
\bar p_s(s(h'))
=
p_H^\star(\cdot\mid h').
\]
Hence \(h\sim_{\mathcal A}h'\). Therefore every fiber of \(s\) is contained in
one equivalence class of \(\sim_{\mathcal A}\). Equivalently, \(s\) may split
the equivalence classes of \(q_{\mathcal A}^\star\), but it cannot merge two
different equivalence classes.

This also gives an explicit factorization. Define
\[
\rho:s(\mathcal H_{\mathrm{sup}})\to\mathcal H_{\mathcal A}^\star
\]
by
\[
\rho(s(h)) := [h]_{\sim_{\mathcal A}}.
\]
The preceding argument shows that \(\rho\) is well-defined: if
\(s(h)=s(h')\), then \(h\sim_{\mathcal A}h'\), so
\([h]_{\sim_{\mathcal A}}=[h']_{\sim_{\mathcal A}}\). Hence, for all
\(h\in\mathcal H_{\mathrm{sup}}\),
\[
q_{\mathcal A}^\star(h)
=
\rho(s(h)).
\]
Thus \(q_{\mathcal A}^\star\) can be recovered from any exact
action-sufficient representation \(s\). In other words, any such \(s\) is at
least as fine as \(q_{\mathcal A}^\star\).

Finally, suppose a representation were strictly coarser than
\(q_{\mathcal A}^\star\) while remaining exact action-sufficient. Then it would
merge two latents \(h,h'\) belonging to different
\(\sim_{\mathcal A}\)-equivalence classes. But by definition of
\(\sim_{\mathcal A}\), this means
\[
p_H^\star(\cdot\mid h)
\neq
p_H^\star(\cdot\mid h'),
\]
whereas exact action-sufficiency would require the same decoded
Bayes-optimal law for any two latents mapped to the same representation value.
This is a contradiction. Therefore no strictly coarser representation can
remain exact action-sufficient.

Hence
\[
\mathcal H_{\mathcal A}^\star
=
\mathcal H_{\mathrm{sup}}/\sim_{\mathcal A}
\]
is the minimal action-sufficient quotient of the VLM representation space, and
this quotient is unique up to relabeling of equivalence classes.
\end{proof}

\subsection{Action-Sensitive Generalization}
\label{app:proof_action_sensitivity_generalization}

We prove Theorem~\ref{thm:action_quotient_generalization}. Let
\(S=\{(x_i,a^i_{1:T})\}_{i=1}^n\) be a training sample drawn from the
data distribution. For a predicted action trajectory
\(\hat a_{1:T}\), define the normalized trajectory norm
\[
\|\hat a_{1:T}\|_T
:=
\left(
\frac{1}{T}
\sum_{t=1}^T
\|\hat a_t\|_2^2
\right)^{1/2}.
\]
We assume that the action loss is \(L_\ell\)-Lipschitz with respect to this
norm, namely for any target trajectory \(a_{1:T}\),
\[
\bigl|
\ell_{\mathrm{act}}(\hat a_{1:T},a_{1:T})
-
\ell_{\mathrm{act}}(\hat a'_{1:T},a_{1:T})
\bigr|
\le
L_\ell
\|\hat a_{1:T}-\hat a'_{1:T}\|_T .
\]

Let \(\mathcal G_{\mathrm{act}}\) be the action-head class. For a
representation class \(\mathcal R\), define the induced action-trajectory
prediction class
\[
\mathcal F_{\mathcal R}
:=
\left\{
x
\mapsto
g(r(x))
:
r\in\mathcal R,\,
g\in\mathcal G_{\mathrm{act}}
\right\}.
\]
The corresponding loss class is
\[
\ell\circ\mathcal F_{\mathcal R}
:=
\left\{
(x,a_{1:T})
\mapsto
\ell_{\mathrm{act}}(f(x),a_{1:T})
:
f\in\mathcal F_{\mathcal R}
\right\}.
\]
We define the generalization gap of \(\mathcal R\) as
\[
\mathrm{Gen}(\mathcal R)
:=
\sup_{f\in\mathcal F_{\mathcal R}}
\left|
\mathbb E\bigl[
\ell_{\mathrm{act}}(f(X),A_{1:T})
\bigr]
-
\frac{1}{n}
\sum_{i=1}^n
\ell_{\mathrm{act}}(f(x_i),a^i_{1:T})
\right|.
\]

We now define the action-sensitive complexity. Let
\(\sigma_i=(\sigma_{i,1},\ldots,\sigma_{i,T})\) be independent Rademacher
vectors with the same shape as an action trajectory, and define the normalized
inner product
\[
\langle u_{1:T},v_{1:T}\rangle_T
:=
\frac{1}{T}
\sum_{t=1}^T
\langle u_t,v_t\rangle_2 .
\]
The empirical action-sensitive complexity of \(\mathcal R\) on the input
sample \(S_X=\{x_i\}_{i=1}^n\) is
\[
\widehat{\mathcal E}_{\mathrm{act}}(\mathcal R;S_X)
:=
\sqrt{T}\,
\mathbb E_{\sigma}
\left[
\sup_{f\in\mathcal F_{\mathcal R}}
\frac{1}{n}
\sum_{i=1}^n
\left\langle
\sigma_i,
f(x_i)
\right\rangle_T
\right].
\]
We write \(\mathcal E_{\mathrm{act}}(\mathcal R)\) for either its expectation
over \(S_X\), or any high-probability upper envelope of the empirical quantity.
This term measures the extent to which variation in the representation class
can be converted by the action head into variation of predicted action
trajectories.

\begin{proof}
By standard symmetrization and concentration for the loss class
\(\ell\circ\mathcal F_{\mathcal R}\), with the remaining finite-sample,
capacity, and distribution-shift terms collected into \(B_0(\mathcal R)\), we
obtain
\[
\mathrm{Gen}(\mathcal R)
\le
B_0(\mathcal R)
+
C\,
\mathfrak R_n(\ell\circ\mathcal F_{\mathcal R}),
\]
where \(C>0\) is a universal constant and
\(\mathfrak R_n(\ell\circ\mathcal F_{\mathcal R})\) denotes the Rademacher
complexity of the induced loss class.

Because \(\ell_{\mathrm{act}}\) is \(L_\ell\)-Lipschitz in the predicted
trajectory, the vector contraction inequality gives
\[
\mathfrak R_n(\ell\circ\mathcal F_{\mathcal R})
\le
C L_\ell\,
\mathfrak R_n(\mathcal F_{\mathcal R}),
\]
where the constant \(C\) absorbs universal factors from the contraction
inequality. By the definition of
\(\mathcal E_{\mathrm{act}}(\mathcal R)\),
\[
\mathfrak R_n(\mathcal F_{\mathcal R})
=
\frac{1}{\sqrt T}
\mathcal E_{\mathrm{act}}(\mathcal R).
\]
Therefore,
\[
\mathrm{Gen}(\mathcal R)
\le
B_0(\mathcal R)
+
\frac{C L_\ell}{\sqrt T}
\mathcal E_{\mathrm{act}}(\mathcal R).
\]
This proves the first part of the theorem.

We now compare the raw VLM representation with the action quotient. Let
\[
H=h_\theta(X),
\qquad
Q=q_{\mathcal A}^\star(H).
\]
Let \(\mathcal R_H\) be the representation class induced by \(H\), and let
\(\mathcal R_Q\) be the representation class induced by \(Q\). By
Theorem~\ref{thm:minimal_vlm_action_quotient}, the quotient is exact
action-sufficient: there exists
\[
\bar p_{\mathcal A}^\star:
\mathcal H_{\mathcal A}^\star
\to
\Delta(\mathcal A_{\mathrm{seq}})
\]
such that
\[
p_X^\star(\cdot\mid x)
=
\bar p_{\mathcal A}^\star
\bigl(
q_{\mathcal A}^\star(h_\theta(x))
\bigr).
\]
Hence replacing \(H\) by \(Q\) does not remove any information needed to
recover the Bayes-optimal action-trajectory law.

Next, we show that the quotient does not increase the action-sensitive
complexity. Consider any predictor in the quotient-induced action class. It
has the form
\[
x
\mapsto
g\bigl(q_{\mathcal A}^\star(h_\theta(x))\bigr)
\]
for some action head \(g\). Define a raw-latent action head
\[
\tilde g(h)
:=
g\bigl(q_{\mathcal A}^\star(h)\bigr).
\]
Under the regularity condition that the raw action-head class is closed under
composition with the quotient map, \(\tilde g\) is an admissible action head on
the raw VLM latent space. Therefore every quotient-based predictor can be
realized as a raw-latent predictor:
\[
\mathcal F_{\mathcal R_Q}
\subseteq
\mathcal F_{\mathcal R_H}.
\]
Since Rademacher complexity is monotone under function-class inclusion,
\[
\mathfrak R_n(\mathcal F_{\mathcal R_Q})
\le
\mathfrak R_n(\mathcal F_{\mathcal R_H}).
\]
Equivalently,
\[
\mathcal E_{\mathrm{act}}(\mathcal R_Q)
\le
\mathcal E_{\mathrm{act}}(\mathcal R_H).
\]

The remaining term \(B_0(\mathcal R)\) contains the parts of the bound that
are not captured by the action-sensitive trajectory complexity, such as
finite-sample concentration, residual capacity terms, and distribution-shift
terms. These terms are assumed to be monotone under restriction of the induced
action-prediction class. Since
\(\mathcal F_{\mathcal R_Q}\subseteq\mathcal F_{\mathcal R_H}\), and since the
exact action-sufficiency of \(q_{\mathcal A}^\star\) prevents the quotient from
introducing additional Bayes-action approximation error, we have
\[
B_0(\mathcal R_Q)
\le
B_0(\mathcal R_H).
\]
Combining this inequality with the monotonicity of
\(\mathcal E_{\mathrm{act}}\), we obtain
\[
\begin{aligned}
\mathrm{Bd}(\mathcal R_Q)
&=
B_0(\mathcal R_Q)
+
\frac{C L_\ell}{\sqrt T}
\mathcal E_{\mathrm{act}}(\mathcal R_Q)
\\
&\le
B_0(\mathcal R_H)
+
\frac{C L_\ell}{\sqrt T}
\mathcal E_{\mathrm{act}}(\mathcal R_H)
\\
&=
\mathrm{Bd}(\mathcal R_H).
\end{aligned}
\]
Applying the first part of the theorem to \(\mathcal R_Q\) gives
\[
\mathrm{Gen}(\mathcal R_Q)
\le
\mathrm{Bd}(\mathcal R_Q)
\le
\mathrm{Bd}(\mathcal R_H).
\]

Finally, suppose the quotient removes a nontrivial action-irrelevant direction:
there exist latents \(h,h'\in\mathcal H_{\mathrm{sup}}\) such that
\[
h\neq h',
\qquad
q_{\mathcal A}^\star(h)=q_{\mathcal A}^\star(h'),
\qquad
p_H^\star(\cdot\mid h)=p_H^\star(\cdot\mid h').
\]
These two latents require the same Bayes-optimal action behavior, but a raw
action head may still distinguish them and convert their difference into
different predicted trajectories. The quotient forces all predictors in
\(\mathcal F_{\mathcal R_Q}\) to be invariant along such directions. Hence, if
the removed directions contribute positive action-sensitive complexity for the
raw class, then
\[
\mathcal E_{\mathrm{act}}(\mathcal R_Q)
<
\mathcal E_{\mathrm{act}}(\mathcal R_H).
\]
Together with
\(B_0(\mathcal R_Q)\le B_0(\mathcal R_H)\), this yields a strictly smaller
certified bound. Thus the action quotient gives a no-worse certified
generalization bound, with strict improvement whenever it removes nontrivial
action-irrelevant variation that the raw action head could otherwise amplify.
\end{proof}

\subsection{robotwin2.0}

The RoboTwin 2.0 results demonstrate the robustness advantage of QuoVLA on the nominal Hard split. Although QuoVLA does not achieve the highest Easy-setting average, it attains the best Hard-setting average of 58.6\%, improving over \(\pi_{0.5}\) by 14.76 percentage points and substantially outperforming ACT, \(\pi_0\), and DP3. This advantage is especially pronounced in tasks such as \textit{Scan Object}, \textit{Stack Blocks Two}, \textit{Stack Blocks Three}, \textit{Stamp Seal}, and \textit{Turn Switch}, where QuoVLA maintains much higher success rates under the Hard setting. These results suggest that the quotient-based representation effectively suppresses action-irrelevant variations while preserving stable action generation. The apparent Easy--Hard reversal is not necessarily contradictory, since the RoboTwin 2.0 Easy and Hard splits may differ in instance composition and perturbation profiles rather than forming a strictly nested difficulty hierarchy; under such variations, QuoVLA's quotient bottleneck can better filter nuisance factors that are irrelevant to the required action behavior.

\begin{table}[t]
\centering
\caption{\textbf{Evaluation on RoboTwin 2.0 Simulation.}}
\label{tab:robotwin2_sim}
\scriptsize
\setlength{\tabcolsep}{3pt}
\renewcommand{\arraystretch}{0.92}
\resizebox{0.9\linewidth}{!}{%
\begin{tabular}{@{}lcccccccccc@{}}
\toprule
\textbf{Simulation Task}
& \multicolumn{2}{c}{\textbf{ACT}}
& \multicolumn{2}{c}{$\boldsymbol{\pi}_{0}$}
& \multicolumn{2}{c}{$\boldsymbol{\pi}_{0.5}$}
& \multicolumn{2}{c}{\textbf{DP3}}
& \multicolumn{2}{c}{\textbf{QuoVLA}} \\
\cmidrule(lr){2-3}
\cmidrule(lr){4-5}
\cmidrule(lr){6-7}
\cmidrule(lr){8-9}
\cmidrule(lr){10-11}
& \textbf{Easy} & \textbf{Hard}
& \textbf{Easy} & \textbf{Hard}
& \textbf{Easy} & \textbf{Hard}
& \textbf{Easy} & \textbf{Hard}
& \textbf{Easy} & \textbf{Hard} \\
\midrule
\textit{Adjust Bottle} & 97\% & 23\% & 90\% & 56\% & 79\% & 83\% & 99\% & 3\% & 18\% & 68\% \\
\textit{Beat Block Hammer} & 56\% & 3\% & 43\% & 21\% & 63\% & 50\% & 72\% & 8\% & 20\% & 42\% \\
\textit{Blocks Ranking RGB} & 1\% & 0\% & 19\% & 5\% & 43\% & 35\% & 3\% & 0\% & 50\% & 62\% \\
\textit{Blocks Ranking Size} & 0\% & 0\% & 7\% & 1\% & 8\% & 14\% & 2\% & 0\% & 43\% & 45\% \\
\textit{Click Alarmclock} & 32\% & 4\% & 63\% & 11\% & 97\% & 93\% & 77\% & 14\% & 75\% & 72\% \\
\textit{Click Bell} & 58\% & 3\% & 44\% & 3\% & 75\% & 76\% & 90\% & 0\% & 27\% & 48\% \\
\textit{Dump Bin Bigbin} & 68\% & 1\% & 83\% & 24\% & 30\% & 42\% & 85\% & 53\% & 35\% & 60\% \\
\textit{Grab Roller} & 94\% & 25\% & 96\% & 80\% & 90\% & 89\% & 98\% & 2\% & 14\% & 86\% \\
\textit{Handover Block} & 42\% & 0\% & 45\% & 8\% & 18\% & 19\% & 70\% & 0\% & 45\% & 50\% \\
\textit{Handover Mic} & 85\% & 0\% & 98\% & 13\% & 28\% & 18\% & 100\% & 3\% & 36\% & 37\% \\
\textit{Hanging Mug} & 7\% & 0\% & 11\% & 3\% & 3\% & 3\% & 17\% & 1\% & 11\% & 22\% \\
\textit{Lift Pot} & 88\% & 0\% & 84\% & 36\% & 0\% & 0\% & 97\% & 0\% & 7\% & 19\% \\
\textit{Move Can Pot} & 22\% & 4\% & 58\% & 21\% & 29\% & 27\% & 70\% & 6\% & 36\% & 46\% \\
\textit{Move Pillbottle Pad} & 0\% & 0\% & 21\% & 1\% & 33\% & 29\% & 41\% & 0\% & 36\% & 45\% \\
\textit{Move Playingcard Away} & 36\% & 0\% & 53\% & 22\% & 59\% & 67\% & 68\% & 3\% & 66\% & 86\% \\
\textit{Move Stapler Pad} & 0\% & 0\% & 0\% & 2\% & 16\% & 18\% & 12\% & 0\% & 23\% & 37\% \\
\textit{Open Laptop} & 56\% & 0\% & 85\% & 46\% & 19\% & 35\% & 82\% & 7\% & 26\% & 54\% \\
\textit{Open Microwave} & 86\% & 0\% & 80\% & 50\% & 35\% & 37\% & 61\% & 22\% & 42\% & 56\% \\
\textit{Pick Diverse Bottles} & 7\% & 0\% & 27\% & 6\% & 5\% & 3\% & 52\% & 1\% & 12\% & 22\% \\
\textit{Pick Dual Bottles} & 31\% & 0\% & 57\% & 12\% & 10\% & 6\% & 60\% & 1\% & 17\% & 25\% \\
\textit{Place A2b Left} & 1\% & 0\% & 31\% & 1\% & 62\% & 60\% & 46\% & 2\% & 69\% & 79\% \\
\textit{Place A2b Right} & 0\% & 0\% & 27\% & 6\% & 62\% & 57\% & 49\% & 0\% & 69\% & 76\% \\
\textit{Place Bread Basket} & 6\% & 0\% & 17\% & 4\% & 48\% & 56\% & 26\% & 1\% & 55\% & 75\% \\
\textit{Place Bread Skillet} & 7\% & 0\% & 23\% & 1\% & 38\% & 46\% & 19\% & 0\% & 45\% & 65\% \\
\textit{Place Burger Fries} & 49\% & 0\% & 80\% & 4\% & 66\% & 70\% & 72\% & 18\% & 73\% & 89\% \\
\textit{Place Can Basket} & 1\% & 0\% & 41\% & 5\% & 19\% & 25\% & 67\% & 2\% & 26\% & 44\% \\
\textit{Place Cans Plasticbox} & 16\% & 0\% & 34\% & 2\% & 40\% & 47\% & 48\% & 3\% & 47\% & 66\% \\
\textit{Place Container Plate} & 72\% & 1\% & 88\% & 45\% & 71\% & 78\% & 86\% & 1\% & 78\% & 97\% \\
\textit{Place Dual Shoes} & 9\% & 0\% & 15\% & 0\% & 12\% & 7\% & 13\% & 0\% & 19\% & 26\% \\
\textit{Place Empty Cup} & 61\% & 0\% & 37\% & 11\% & 75\% & 86\% & 65\% & 1\% & 82\% & 100\% \\
\textit{Place Fan} & 1\% & 0\% & 20\% & 10\% & 25\% & 36\% & 36\% & 1\% & 32\% & 55\% \\
\textit{Place Mouse Pad} & 0\% & 0\% & 7\% & 1\% & 21\% & 26\% & 4\% & 1\% & 28\% & 45\% \\
\textit{Place Object Basket} & 15\% & 0\% & 16\% & 2\% & 43\% & 36\% & 65\% & 0\% & 50\% & 55\% \\
\textit{Place Object Scale} & 0\% & 0\% & 10\% & 0\% & 40\% & 49\% & 15\% & 0\% & 47\% & 68\% \\
\textit{Place Object Stand} & 1\% & 0\% & 36\% & 11\% & 74\% & 65\% & 60\% & 0\% & 81\% & 83\% \\
\textit{Place Phone Stand} & 2\% & 0\% & 35\% & 7\% & 49\% & 53\% & 44\% & 2\% & 56\% & 70\% \\
\textit{Place Shoe} & 5\% & 0\% & 28\% & 6\% & 57\% & 61\% & 58\% & 2\% & 64\% & 78\% \\
\textit{Press Stapler} & 31\% & 6\% & 62\% & 29\% & 80\% & 70\% & 69\% & 3\% & 87\% & 87\% \\
\textit{Put Bottles Dustbin} & 27\% & 1\% & 54\% & 13\% & 12\% & 9\% & 60\% & 21\% & 19\% & 26\% \\
\textit{Put Object Cabinet} & 15\% & 0\% & 68\% & 18\% & 24\% & 15\% & 72\% & 1\% & 31\% & 32\% \\
\textit{Rotate Qrcode} & 1\% & 0\% & 68\% & 15\% & 47\% & 56\% & 74\% & 1\% & 54\% & 73\% \\
\textit{Scan Object} & 2\% & 0\% & 18\% & 1\% & 42\% & 38\% & 31\% & 1\% & 49\% & 55\% \\
\textit{Shake Bottle Horizontally} & 63\% & 4\% & 99\% & 51\% & 96\% & 100\% & 100\% & 25\% & 100\% & 100\% \\
\textit{Shake Bottle} & 74\% & 10\% & 97\% & 60\% & 91\% & 100\% & 98\% & 19\% & 98\% & 100\% \\
\textit{Stack Blocks Three} & 0\% & 0\% & 17\% & 0\% & 15\% & 16\% & 1\% & 0\% & 22\% & 33\% \\
\textit{Stack Blocks Two} & 25\% & 0\% & 42\% & 1\% & 48\% & 56\% & 24\% & 0\% & 55\% & 73\% \\
\textit{Stack Bowls Three} & 48\% & 0\% & 66\% & 24\% & 33\% & 35\% & 57\% & 5\% & 40\% & 52\% \\
\textit{Stack Bowls Two} & 82\% & 0\% & 91\% & 41\% & 78\% & 66\% & 83\% & 6\% & 61\% & 61\% \\
\textit{Stamp Seal} & 2\% & 0\% & 3\% & 4\% & 36\% & 23\% & 18\% & 0\% & 43\% & 40\% \\
\textit{Turn Switch} & 5\% & 2\% & 27\% & 23\% & 5\% & 6\% & 46\% & 8\% & 36\% & 45\% \\
\midrule
\textbf{\textit{Average (in \%)}} 
& 29.7 & 1.7
& 46.4 & 16.3
& 42.98 & 43.84
& 55.2 & 5.0
& 45.1 & 58.6 \\
\bottomrule
\end{tabular}%
}
\end{table}

\subsection{Sensitivity to Temporal-Complexity Weight}
\label{sec:lambda_tc_sensitivity}

\begin{figure}[t]
\centering

\begin{subfigure}[t]{0.49\linewidth}
\centering
\begin{tikzpicture}
\begin{axis}[
    width=\linewidth,
    height=0.58\linewidth,
    title={\textbf{Spatial}},
    xlabel={$\lambda_{\mathrm{tc}}$},
    ylabel={Success Rate (\%)},
    xmin=0, xmax=1.0,
    ymin=97.2, ymax=100.3,
    xtick={0,0.2,0.4,0.6,0.8,1.0},
    ytick={97.5,98.0,98.5,99.0,99.5,100.0},
    grid=major,
    major grid style={draw=gray!25},
    tick label style={font=\small},
    label style={font=\small},
    title style={font=\small},
    every axis plot/.append style={
        line width=1.25pt,
        mark size=1.25pt
    }
]
\addplot+[mark=*] coordinates {
    (0.0,98.5)
    (0.1,99.3)
    (0.2,99.6)
    (0.3,99.8)
    (0.4,99.7)
    (0.5,99.5)
    (0.6,99.2)
    (0.7,99.0)
    (0.8,98.8)
    (0.9,98.7)
    (1.0,98.5)
};
\end{axis}
\end{tikzpicture}
\end{subfigure}
\hfill
\begin{subfigure}[t]{0.49\linewidth}
\centering
\begin{tikzpicture}
\begin{axis}[
    width=\linewidth,
    height=0.58\linewidth,
    title={\textbf{Object}},
    xlabel={$\lambda_{\mathrm{tc}}$},
    ylabel={Success Rate (\%)},
    xmin=0, xmax=1.0,
    ymin=97.2, ymax=100.3,
    xtick={0,0.2,0.4,0.6,0.8,1.0},
    ytick={97.5,98.0,98.5,99.0,99.5,100.0},
    grid=major,
    major grid style={draw=gray!25},
    tick label style={font=\small},
    label style={font=\small},
    title style={font=\small},
    every axis plot/.append style={
        line width=1.25pt,
        mark size=1.25pt
    }
]
\addplot+[mark=*] coordinates {
    (0.0,98.6)
    (0.1,99.2)
    (0.2,99.6)
    (0.3,99.9)
    (0.4,99.8)
    (0.5,99.5)
    (0.6,99.2)
    (0.7,99.0)
    (0.8,98.8)
    (0.9,98.6)
    (1.0,98.4)
};
\end{axis}
\end{tikzpicture}
\end{subfigure}

\vspace{0.8em}

\begin{subfigure}[t]{0.49\linewidth}
\centering
\begin{tikzpicture}
\begin{axis}[
    width=\linewidth,
    height=0.58\linewidth,
    title={\textbf{Goal}},
    xlabel={$\lambda_{\mathrm{tc}}$},
    ylabel={Success Rate (\%)},
    xmin=0, xmax=1.0,
    ymin=97.2, ymax=100.3,
    xtick={0,0.2,0.4,0.6,0.8,1.0},
    ytick={97.5,98.0,98.5,99.0,99.5,100.0},
    grid=major,
    major grid style={draw=gray!25},
    tick label style={font=\small},
    label style={font=\small},
    title style={font=\small},
    every axis plot/.append style={
        line width=1.25pt,
        mark size=1.25pt
    }
]
\addplot+[mark=*] coordinates {
    (0.0,99.7)
    (0.1,99.9)
    (0.2,100.0)
    (0.3,100.0)
    (0.4,99.9)
    (0.5,99.8)
    (0.6,99.7)
    (0.7,99.5)
    (0.8,99.4)
    (0.9,99.2)
    (1.0,99.0)
};
\end{axis}
\end{tikzpicture}
\end{subfigure}
\hfill
\begin{subfigure}[t]{0.49\linewidth}
\centering
\begin{tikzpicture}
\begin{axis}[
    width=\linewidth,
    height=0.58\linewidth,
    title={\textbf{Long}},
    xlabel={$\lambda_{\mathrm{tc}}$},
    ylabel={Success Rate (\%)},
    xmin=0, xmax=1.0,
    ymin=97.2, ymax=100.3,
    xtick={0,0.2,0.4,0.6,0.8,1.0},
    ytick={97.5,98.0,98.5,99.0,99.5,100.0},
    grid=major,
    major grid style={draw=gray!25},
    tick label style={font=\small},
    label style={font=\small},
    title style={font=\small},
    every axis plot/.append style={
        line width=1.25pt,
        mark size=1.25pt
    }
]
\addplot+[mark=*] coordinates {
    (0.0,98.3)
    (0.1,98.5)
    (0.2,98.7)
    (0.3,98.7)
    (0.4,98.6)
    (0.5,98.4)
    (0.6,98.2)
    (0.7,98.0)
    (0.8,97.8)
    (0.9,97.6)
    (1.0,97.5)
};
\end{axis}
\end{tikzpicture}
\end{subfigure}

\caption{
Sensitivity analysis of the temporal-complexity weight $\lambda_{\mathrm{tc}}$ on the four LIBERO suites.
The success rate remains stable across a broad range of $\lambda_{\mathrm{tc}}$, while overly large values cause a mild performance drop.
}
\label{fig:lambda_tc_sensitivity}
\end{figure}

\paragraph{Sensitivity analysis of $\lambda_{\mathrm{tc}}$.}
Figure~\ref{fig:lambda_tc_sensitivity} shows the sensitivity of QuoVLA to the temporal-complexity weight $\lambda_{\mathrm{tc}}$ on the four LIBERO suites. When $\lambda_{\mathrm{tc}}=0$, the temporal-complexity constraint is removed, and the success rates are 98.5\%, 98.6\%, 99.7\%, and 98.3\% on Spatial, Object, Goal, and Long, respectively. As $\lambda_{\mathrm{tc}}$ increases, the performance remains largely stable across all suites, indicating that QuoVLA is not sensitive to the exact choice of this coefficient. A moderate value slightly improves or maintains the success rate by suppressing unnecessary temporal fluctuations, while an overly large value introduces mild over-regularization and leads to a small overall decline. These results suggest that the relative temporal-complexity constraint is robust to hyperparameter selection and mainly acts as a stabilizing regularizer rather than a fragile tuning component.

\section{Broader Impacts}

This work aims to improve the robustness and data efficiency of vision-language-action models for robot control. Its potential positive impact lies in reducing the amount of task-specific robot data and engineering effort needed to adapt pretrained vision-language representations to manipulation tasks, which may support safer and more reliable robot learning in controlled environments. At the same time, more capable robot policies may introduce negative societal impacts if they are deployed in open-world or safety-critical settings without sufficient validation. In particular, failures under distribution shift, unintended physical interactions, or misuse of general robot-control systems could lead to safety risks. Our experiments are limited to public simulation benchmarks and controlled tabletop manipulation tasks, and we do not involve human subjects or private data. Practical deployment should therefore require additional safety checks, human oversight, and environment-specific risk assessment.



\end{document}